\tikzstyle{bubble}=[align=center, draw,inner sep=0pt,shape=ellipse,minimum height=1.25cm, minimum width=3cm]
\theoremstyle{plain}
\def\sideremark#1{\ifvmode\leavevmode\fi\vadjust{\vbox to0pt{\vss
\hbox to 0pt{\hskip\hsize\hskip1em%
\vbox{\hsize2cm\tiny\raggedright\pretolerance10000%
\noindent {\color{red}{#1}}\hfill}\hss}\vbox to8pt{\vfil}\vss}}}%
\newtheorem{thm}{Theorem}[section]
\newtheorem{prop}[thm]{Proposition}
\newtheorem{cor}[thm]{Corollary}
\newtheorem{lem}[thm]{Lemma}
\theoremstyle{definition}
\newtheorem{defn}[thm]{Definition}
\newtheorem{ex}[thm]{Example}
\newtheorem{remk}[thm]{Remark}
\newcommand{\ol}[1]{\overline{#1}}
\newcommand{\wt}[1]{\widetilde{#1}}
\newcommand{\wh}[1]{\widehat{#1}}
\newcommand{\bbq}{\mathbb Q}
\newcommand{\supp}{\textnormal{supp}}
\begin{document}

\title[Reconstructing Functions on Abelian Groups with Autocorrelations] 
{Reconstructing Rational Functions on Finite Abelian Groups with Higher Autocorrelations}
\author[W. Riley Casper]{W. Riley Casper}
\email{wcasper@fullerton.edu}

\author[Bobby Orozco]{Bobby Orozco}
\email{orozcobobby03@csu.fullerton.edu}
\address{
Department of Mathematics \\
California State University \\
Fullerton, CA 92831 \\
U.S.A.}

\date{}
\keywords{finite groups, group characters, discrete Fourier transform, X-ray crystallography, separating invariants}
\subjclass[2020]{05C69, 05C40, 20C30,13A50}
\begin{abstract}
The higher-order autocorrelations of integer-valued or rational-valued functions on finite Abelian groups appear naturally in X-ray crystallography, and have applications in computer vision systems, correlation tomography, correlation spectroscopy, and pattern recognition.
In this paper, we consider the problem of reconstructing a rational-valued function on finite Abelian groups from its higher-order autocorrelations.  We describe an explicit reconstruction algorithm, and prove that the autocorrelations up to order $3r+3$ are always sufficient to determine the data up to translation, where $r$ is the rank of the group.
We also provide examples of rational-valued functions on finite Abelian group which are not determined by their autocorrelations up to order $3r+2$.
In particular, we provide a sharp upper bound on the separating degree of the regular representation of a finite Abelian group in terms of its rank.  
\end{abstract}

\maketitle
\section{Introduction}
In this paper, we consider the problem of uniquely determining rational-valued functions on a finite Abelian group $G$ with rank $r$ (ie. rational data in an $r$-dimensional periodic grid) from its higher autocorrelations. 
Mathematically, we represent elements of $G$ as $r$-tuples of integers
$$x=(x[1],x[2],\dots,x[r]),$$
and data on $G$ as a function $f(x)$ of these $r$-tuples.  Autocorrelations of a data set are correlations between the set of data and a shift of the data, ie. between $f(x)$ and $f(x+y)$.
More generally, the \textbf{higher-order autocorrelations} are defined to be
$$M_{n}(f;x_1,\dots, x_{n-1}) = \sum_{y\in G} f(y)f(y+x_1)\dots f(y+x_{n-1}),$$
for $n\geq 1$ and $x_1,\dots, x_{n-1}\in G$.
Equivalently, functions on $G$ can be viewed as autocorrelations of periodic functions on $\mathbb R^r$ with finite Fourier series expansions.
Autocorrelations cannot distinguish between a function and its translations, so the precise problem we consider is what order of autocorrelation data is required to determine a function up to a translation (ie. a global phase).
This fits into a wider class of problems on finding degree bounds for separating invariants for finite Abelian groups.  In particular, it is equivalent to obtaining the separating degree $d$ of the regular representation of a finite Abelian group $G$.  Similar calculations are explored in \cite{kemper,kohls,domokos1,domokos2}.

Higher-order autocorrelations are useful in computer vision applications, such as gate recognition, object tracking, or video
surveillance \cite{otsu,hidaka}.
They also have applications in tomography \cite{zhang}, spectroscopy \cite{shang}, and pattern recognition \cite{mclaughlin}.
In particular, machine learning algorithms training on higher correlation data can be taught to recognize key features, such as the number of holes.  Vision chips using higher-order local autocorrelations (HLAC), ie. values of $M_{n}(f;x_1,\dots, x_{n-1})$ with $n\leq 3$ and with a $3\times 3$ mask pattern (ie. $|x_j|\leq 1$) have been built \cite{yamamoto}.
Autocorrelations of order up to $8$ and with larger mask patterns have also been studied \cite{toyoda}.  This raises the question of what order autocorrelations must be computed to uniquely determine an image. 

Historically, higher-order autocorrelations also appear in X-ray crystallography, where they are referred to as higher-order moments.
The central problem of X-ray crystallography is to determine a crystal lattice structure directly from observations of X-ray diffraction data.
In a basic experiment, one fires a monochromatic $X$-ray beam at a material sample from several different angles and observes the resulting diffraction patterns.
Using these observations, one obtains estimates of the intensities, ie. the squared norms $|F_{j,k,\ell}|^2$ of the coefficients of the electron density's Fourier expansion
$$\rho(x,y,z) = \sum_{j,k,\ell} F_{j,k,\ell} e^{-2\pi i(jx + ky + \ell z)}.$$
From there one attempts to reconstruct $\rho$ and deduce the atomic structure.

In certain cases, in particular in the presence of a few dominant atoms, this is possible via analysis of the Patterson map, eg. the Fourier transform of the intensity \cite{patterson1,patterson2}.
However, in general knowledge of the intensities is insufficient to determine the electron density, as first observed by Pauling and Shappell when studying arrangements of metal atoms in bixbyite \cite{pauling}.  
To further distinguishing atomic structures, Harker and Kasper proposed to use inequalities and direct observations of diffraction data to obtain estimates of the third-order moments, ie. third-order autocorrelation functions \cite{harker}.  This work was later refined by Karle and Hauptman and led in part to a Nobel Prize.

However, in the end even third-order autocorrelations may not be enough.
Different functions on a finite Abelian group can have the same exact autocorrelations up to order three or higher, without being translations of one-another.
As one key example, whenever $(a,b,c)$ is an integer point on the conic surface
$$x^2+3y^2=z^2$$
the autocorrelations of the function on the cyclic group $\mathbb Z/6\mathbb Z$
\begin{center}
\begin{tabular}{c|c|c|c|c|c|c}
$x$ & $0$ & $1$ & $2$ & $3$ & $4$ & $5$\\\hline
$f(x)$ & $2a$ & $a-3b$ & $-a-3b$ & $-2a$ & $3b-a$ & $a+3b$
\end{tabular}
\end{center}
up to order $5$ depend \emph{only} on the value $c$ of the triple $(a,b,c)$ (see Example \ref{ex:Z6example} for details).  Therefore for fixed $c$ we can obtain functions with the same higher autocorrelations.

One-dimensional binary data in particular has been a topic of intense research. 
Two binary data sets with the same second-order autocorrelations are called homometric cyclotomic sets and initially studied by Patterson \cite{patterson3}.
Homometric cyclotomic sets arise naturally from cyclic difference sets in additive combinatorics \cite{baumert}, as well as from $Z$-relations in music theory \cite{schuijer}, and from various arithmetic functions in number theory.

While this paper focuses on reconstructing functions on finite abelian groups (ie. discrete gridded data), one should note there is a continuous version of this problem, which is discussed in \cite{stieltjes1,stieltjes2, hamburger, shohat}, with modern treatments appearing in \cite{mead, provost, schmudgen}. 

In \cite{grunbaum}, Gr\"{u}nbaum and Moore proved that knowledge of the autocorrelations up to order six determines a rational-valued function $f$ on any cyclic group $G$ up to translation.
In fact, when $|G|$ (the number of elements of $G$) is odd the autocorrelations up to order $4$ are sufficient. 
This on its face is surprising, since the order of the autocorrelations required is independent of the size of the group. 
It also starkly contrasts with the non-finite or continuous settings.
Integrable functions on $\mathbb R$ with compact support are determined by their third-order autocorrelations \cite{bartelt,yellott}. 
However, in general functions on $\mathbb{R}$ can have the same autocorrelations to arbitrarily high order, without being translations of each other \cite{chazan}.  

Of course, in many different applications such as image recognition, data is encoded in a higher-dimensional rectangular grid.
In this setting even higher-order autocorrelations are required, and the need for an extension of Gr\"{u}nbaum and Moore's result to non-cyclic groups is apparent.
In particular, if we know what order is required to identify functions, it could better inform us on what autocorrelations to use in feature selection.
Even better, one would desire an algorithm for reconstructing a function from its autocorrelation data.

In this paper, we obtain a sharp bound for what order of autocorrelations are necessary to uniquely determine data on a function on arbitrary finite Abelian groups (ie. higher-dimensional grids), extending Gr\"{u}nbaum and Moore's result to the non-cyclic setting.
We also describe an explicit algorithm for reconstructing the data from its autocorrelations, up to a global phase.
Such an algorithm was not previously known, even in the cyclic (one-dimensional) case.

Up to isomorphism, any finite Abelian group is of the form

\begin{equation}\label{eqn:structuretheorem}
    G = (\mathbb{Z}/a_1\mathbb{Z})\times (\mathbb{Z}/a_2\mathbb{Z})\times\dots\times (\mathbb{Z}/a_r\mathbb{Z})
\end{equation}

for a unique sequence of positive integers $a_1,\dots, a_r$ with the property that $a_j | a_k$ whenever $1\leq j < k\leq r$.  The integers $a_1,\dots,a_r$ are called the \textit{invariant factors} of $G$, and the largest one $a_r$ is called the \textit{exponent} of $G$ and denote by $\exp(G)$.

By leveraging the action of the Galois group of field automorphisms of an extension of $\bbq$ by an $\exp(G)$'th root of unity, we can prove our main result saying that the rank of the group determines the number of required autocorrelations.
\medskip
\\\noindent
{\bf{Main Theorem.}} {\em{
Let $G$ be a finite Abelian group with rank $r$ (ie. an $r$-dimensional rectangular grid with periodic boundary conditions), and let $f$ be a rational-valued function on $G$.
If $|G|$ is odd, then $f$ can be reconstructed from its autocorrelations, up to order $2r+2$. If $|G|$ is even, then it can be reconstructed with autocorrelations up to order $3r+3$.
}}

We show that our bound is sharp in the sense that for any integer $r>0$ there exists rational-valued functions $f$ and $g$ on a finite Abelian group of rank $r$ of a certain size with the same autocorrelations up to order $3r+2$, but which are not translations of each other (see Example \ref{ex:kahuna}).
In particular, the theorem suggests that in some situations HLAC of an image up to order $9$ can still contain interesting data not seen by HLAC of order $8$ (with the caveat of potential discrepancies about how the boundaries of images are handled).

The proof of our Main Theorem is constructive: we use the data of the autocorrelations up to order $2r+2$ or $3r+3$ to explicitly reconstruct the function, up to a translation. Therefore we not only generalize Gr\"{u}nbaum and Moore's result in the cyclic setting, but we also fill an important gap in the literature on how to reconstruct a rational-valued function from its autocorrelation data.

The primary application of our Main Theorem and the associated constructive algorithm is to non-generic signal reconstruction.  There are many well-known results showing that for ``generic" signals, ie. those excluding a set of exceptional poorly behaved signals, reconstruction can be done using autocorrelation up to order $3$ \cite{bandeira,edidin}.  In contrast, our theorem can be applied to ``worst-case" signals, where the discrete data to be reconstructed is highly structured.  Examples include binary images, integer-valued occupancy patterns, periodic codewords, and finite libraries of gridded templates. Such signals often possess symmetries, repeated motifs, or sparse Fourier support, like the images of the two-dimensional examples in Figure \ref{fig:Z30Z_squared_f_and_g}.  This figure, depicting a special case of Example \ref{ex:kahuna}, describes two functions which are not translations of one-another, but whose autocorrelations agree up to order $8$.
Sparse Fourier support in particular is a problem in multi-reference realignment, where bispectrum reconstruction algorithms commonly assume the Fourier coefficients are bounded away from zero \cite{perry}.

The need to consider worst-case signals arises naturally in certain situations.  Problems like the combinatorial $k$-deck problem deal with \textit{arbitrary} subsets of a finite group, so analysis of all signals, rather than just generic ones, is necessary.
Our results could also have future applications to machine learning, particularly to the study low-dimensional invariant embeddings that retain enough information to support universal invariant learning, as for example in \cite{dym}. In a rational-valued version of this setting, our autocorrelation bounds provide an explicit information-complete invariant representation for rational-valued periodic arrays.

\subsection{Connection with separating invariants}

Let $G$ be a group and $V$ be a finite dimensional vector space over a field $\mathbb F$ with a linear action by the group $G$, ie. a finite dimensional $G$-representation over $\mathbb F$.  The \textbf{orbit} of $v\in V$ is the set
$$G\cdot v = \{g\cdot v: g\in G\}.$$ 
The algebra of $\mathbb F$-valued polynomial functions on $V$ is denoted by $\mathbb F[V]$.  A polynomial $p\in\mathbb F[V]$ is called \textbf{$G$-invariant} if $p(g\cdot v) = p(v)$ for all $v\in V$ and $g\in G$.  The subalgebra of $G$-invariant polynomials is denoted by $\mathbb F[V]^G$.
We say a subset $S\subseteq \mathbb F[V]^G$ is \textbf{separating} if for all $v,w\in V$
$$G\cdot v\neq G\cdot w\Longleftrightarrow \exists p\in S: p(v)\neq p(w).$$
It is well-known that $\mathbb F[V]^G$ itself is separating.  Even better, the Artin-Tate Lemma implies that if $G$ is a finite group then $\mathbb F[V]^G$ is finitely generated.  Consequently, there will exist a degree $d$ such that the set of polynomials in $\mathbb F[V]^G$ of degree at most $d$ is separating.  We call the smallest such value of $d$ the \textbf{separating degree} of $V$. The maximum separating degree over all finite dimensional representations of $G$ is called the \textbf{separating Noether number} of $G$ and denoted by $\beta_{\text{sep}}^{\mathbb F}(G)$ and is studied in \cite{kemper,kohls,domokos1,domokos2}.

In this paper, we are considering the special case of the regular representation of a finite Abelian group $G$ over $\mathbb Q$.  In other words, we are considering the vector space $V = \{f: G\rightarrow\mathbb Q\}$ of functions from $G$ to $\mathbb Q$ and the action by translation
$$y\cdot f(x) = f(x + y),\quad y\in G.$$
The set of autocorrelations up to order $d$ forms a spanning set of the space of $G$-invariant polynomials of degree at most $d$, so the Main Theorem is determining an upper bound for the separating degree of the regular representation of $G$.  As Domokos proves in \cite{domokos2}, the separating degree of the regular representation of a finite group is equal to the separating Noether number, so this is also an upper bound for the separating Noether number, ie. for any finite Abelian group $G$
\begin{equation}\label{eqn:betabound}
\beta_{\text{sep}}^{\mathbb Q}(G)\leq \left\lbrace\begin{array}{ll}
3r + 3,& G\ \text{even of rank $r$}\\
2r + 2,& G\ \text{odd of rank $r$}
\end{array}\right..
\end{equation}
Moreover, for each rank $r$ we provide examples of Abelian groups for which this is an equality.
This pairs well with Domokos's result \cite{domokos2}, where he proves $\beta_{sep}^{\mathbb Q}(\mathbb Z/p\mathbb Z) = 3$ for any odd prime $p$.

It's worth mentioning our result should be viewed as an analysis of the orbit separation problem in the worst-case setting, where we consider all possible potential signals.  Recent analyses also focus on determining generic signals, where one might exclude a collection of disallowed signals with bad behavior.  Many results in the generic setting prove that having moments up to the third order, equivalently polynomials up to degree $3$, are sufficient to recover the signal up to its orbit, in particular for the left regular representation of a finite group \cite{bandeira,edidin}.

\subsection{Mathematical tools}
There are two main mathematical ingredients that go into our reconstructive process: discrete Fourier transforms and field automorphisms.

The discrete Fourier transform $\wh f$ of a function $f(x)$ is a complex-valued function on $G$ defined by
$$\wh f(x) = \sum_{y\in G} f(y)\exp\left(-2\pi i \sum_{k=1}^r\frac{x[k]\cdot y[k]}{a_k}\right).$$
Here the $a_k$ values are the invariant factors of $G$ coming from Equation \ref{eqn:structuretheorem}, and can be viewed as the number of points in our $r$-dimensional grid in the $k$'th direction. 
More generally, the $n$-dimensional discrete Fourier transform is defined by computing the discrete Fourier transform one variable at a time.

The discrete Fourier transform of the $n$'th moment function $M_n(f;x_1,\dots, x_{n-1})$
is the product
$$\wh M_n(f;x_1,\dots, x_{n-1}) = \wh f(x_1)\wh f(x_2)\dots\wh f(x_{n}),$$
where here $x_n=-(x_1+x_2+\dots+x_{n-1})$.
Furthermore, constructing $f(x)$ up to a translation by some value $y$ is equivalent to constructing $\wh f(x)$ up to multiplication by the exponential function
$$\chi(x,y) = \exp\left(2\pi i \sum_{k=1}^r\frac{x[k]\cdot y[k]}{a_k}\right).$$
Such a function is called a \textbf{character} or \textbf{global phase factor}.   

Field automorphism are our second important mathematical tool.
Since $f$ is rational-valued, the values of $\wh f$ belong to the cyclotomic field extension of $\mathbb Q$ by $\xi_N$, where $N=\exp(G)$ is the \textbf{exponent} of $G$, which is defined as
$$\exp(G) = \min\{n > 0: nx=0\ \text{for all}\ x\in G\}.$$
This is the field consisting of all linear combinations of $N$'th roots of unity with rational coefficients
$$\mathbb{Q}(\xi_N) = \{c_0+c_1\xi_N^1+\dots+c_{N-1}\xi_N^{N-1}:c_0,\dots,c_{N-1}\in\mathbb Q\},$$
where $\xi_N=e^{2\pi i/N}$.
For any $a\in\mathbb Z/N\mathbb{Z}$ which is relatively prime to $N$, there is a unique field automorphism $\sigma_a$ of $\mathbb{Q}(\xi_N)$, defined by
\begin{equation}\label{eqn:automorphism}
\sigma_a: \sum_{k=0}^{N-1}c_k\xi_N^k \mapsto  \sum_{k=0}^{N-1}c_k\xi_N^{ak}.
\end{equation}
For example, $\sigma_1$ is the identity function $\sigma_{N-1}$ is complex conjugation; from Galois Theory, the condition that $f$ is rational-valued is then equivalent to the condition that
$$\wh f(ax)=\sigma_a(\wh f(x))$$
for all $0\leq a < N$ relatively prime to $N$.

\subsection{Basic algorithm}
We first describe our algorithm with examples in the simplest setting, when $\wh f$ is nonzero on a nice generating set of the finite Abelian group $G$.  Following our examples, we will explain what can go wrong and why we might sometimes require autocorrelations as high as order $3r+3$.
In the nice case, the recovery of $\wh f(x)$ from its autocorrelations is not too complicated, and can be done with autocorrelations of order no  higher than $3r+1$.  In fact, if $\exp(G)$ is prime, it can be done with the autocorrelations of order at most $\max(3,r+1)$.  The algorithm proceeds in the following basic steps.
\medskip\\
\noindent
\textbf{Basic Algorithm:}
\begin{enumerate}[\textbf{Step}\ 1.]
\item Calculate an integer power of $\wh f$ at the grid points $e_1=(1,0,\dots,0),\dots,e_r=(0,0,\dots, 1)$ using the formula
{\small
$$
\wh f(e_j)^{2^{\phi(N)}-1}
   = \prod_{k=0}^{\phi(N)-1}\left(\frac{\wh M_{3}(f;2^ke_j,2^ke_j)}{\wh M_2(f;2^{k+1}e_j)}\right)^{2^{(\phi(N)-1-k)}}
$$
}
\hspace{-0.6em}
where here $\phi(N)$ is Euler's phi function.  If $N$ is even, we instead use unital decompostions of powers of $2$, following the formula in Theorem \ref{thm:recon-evenpow} below.  (See also Definition \ref{def:unital}.)
\item For each $e_k$, find a rational combination
$$\alpha_k = \sum_{j=0}^{N-1} c_{kj}\xi_N^{j}$$
satisfying
$$\alpha_k^{2^{\phi(N)}-1} = \wh f(e_j)^{2^{\phi(N)}-1}.$$
\item Define a function $\wh g(x)$ using the formula
$$\ol{\wh g(x)} = \frac{\wh M_{r+1}(f;x[1]e_1,\dots,x[r]e_{r})}{\sigma_{x[1]}( \alpha_1)\sigma_{x[2]}( \alpha_2)\dots \sigma_{x[r]}( \alpha_r)}$$
on the values of $x\in G$ with $x[k]$ relatively prime to $N$ for all $1\leq k\leq r$.
If some $x[j]$ is nonzero and not relatively prime to $N$, we replace $x[j]e_j$ with three terms $ue_j$, $ve_j$ an $we_j$ in the numerator (increasing the order of the moment), where $x[j]=u+v+w$ is a unital decomposition (see Definition \ref{def:unital}).  Likewise, we replace $\sigma_{x[j]}(\alpha_j)$ in the denominator with $\sigma_u(\alpha_j)\sigma_v(\alpha_j)\sigma_w(\alpha_j)$.
If some entries in the moment are zero, we drop them (using a lower-order moment) and drop the corresponding $\alpha$ in the denominator.
Lastly, we define $\wh g(0) = M_1(f)$.
The inverse Fourier transform then returns a function $g$ which is a translation of $f$.
\end{enumerate}
With this algorithm in mind, we demonstrate two examples.
\begin{ex}
Imagine we are trying to find an unknown rational-valued function $f$ on a cyclic group with seven points $G = \{0,1,\dots, 6\}$.
Suppose we are given that the first-order autocorrelation is $M_1(f) = 3$, the second-order autocorrelation data is
$$M_2(f,x) = \left\lbrace\begin{array}{cc}
1, & x\neq 0,\\
3, & x = 0
\end{array}\right.,$$
and that the third-order autocorrelation data is
$$M_3(f,x) = \left\lbrace\begin{array}{cc}
3, & x = y = 0\\
1, & x=0\ \text{xor}\ y=0\\
1, & \{x,y\}=\{1,5\},\{2,3\},\ \text{or}\ \{4,6\}\\
0, & \text{otherwise}
\end{array}\right..$$
Then the transformed second-order autocorrelation data is
$$\wh M_2(f,k) = \left\lbrace\begin{array}{cc}
2, & k\neq 0,\\
9, & k = 0
\end{array}\right.,$$
and the third-order autocorrelation data is
$$\wh M_3(f,j,k) = \left\lbrace\begin{array}{cl}
27, & j=k=0,\\
6, & j\ \text{xor}\ k=0,\\
\xi_7^{j+3k} + \xi_7^{2j+6k} + \xi_7^{4j+5k}\\ + \xi_7^{3j+k} + \xi_7^{6j+2k} + \xi_7^{5j+4k}, & \text{otherwise}.
\end{array}\right.$$
Note in particular, this implies that $\wh f(0)^2=9$ and $\wh f(0)^3=27$, so $\wh f(0) =3$.

Since $2^3 = 1\mod 7$, we can actually get away with replacing $\phi(7)$ with $3$ in Step 1 (Theorem \ref{thm:recon-oddpow} and the remark following the theorem).  
Then we calculate
\begin{align*}
\wh f(1)^{7}
& = \prod_{k=0}^{2}\left(\frac{\wh M_{3}(f;2^k,2^k)}{\wh M_2(f;2^{k+1})}\right)^{2^{2-k}}\\
& = \prod_{k=0}^{2}\left(\xi_7^{2^{k}} + \xi_7^{2^{k+1}} + \xi_7^{2^{k+2}}\right)^{2^{2-k}}\\
& = 
\left(\xi_7^1 + \xi_7^{2} + \xi_7^{4}\right)^4
\left(\xi_7^{2} + \xi_7^{4} + \xi_7^{8}\right)^2
\left(\xi_7^{4} + \xi_7^{8} + \xi_7^{16}\right)\\
& = 
\left(\xi_7^1 + \xi_7^{2} + \xi_7^{4}\right)^7.
\end{align*}
This makes Step 2 of finding $\alpha_1$ easy.
We take
$$\alpha_1 = (\xi_7^1 + \xi_7^{2} + \xi_7^{4}).$$
Then using the formula in Step 3:
\begin{align*}
\ol{\wh g(a)} = \frac{\wh M_2(f;a)}{\sigma_a(\alpha_1)} = \frac{2}{\xi_7^a + \xi_7^{2a} + \xi_7^{4a}} = \xi_7^{-a} + \xi_7^{-2a} + \xi_7^{-4a}.
\end{align*}
Therefore
$$\wh g(a) = \xi_7^a + \xi_7^{2a} + \xi_7^{4a}.$$
Also, the first autocorrelation says that $\wh g(0) = M_1(f) = 3$.
Taking the inverse Fourier transform, we find the data we were trying to recover must have been
\begin{center}
\begin{tabular}{c|c|c|c|c|c|c|c}
$x$ & 0 & 1 & 2 & 3 & 4 & 5 & 6\\\hline
$g(x)$ & 0 & 0 & 0 & 1 & 0 & 1 & 1
\end{tabular}
\end{center}
up to a cyclic translation.
It is easy to check that the function we found satisfies the specified moment data.
\end{ex}

\begin{ex}
As a more complicated demonstration of our algorithm in practice, consider the problem of recovering an image like the one in Figure \ref{fig:crab}.  Our problem size is kept small for sake of demonstration.
\begin{figure}[htp]
    \centering
    \includegraphics[width=0.42\linewidth]{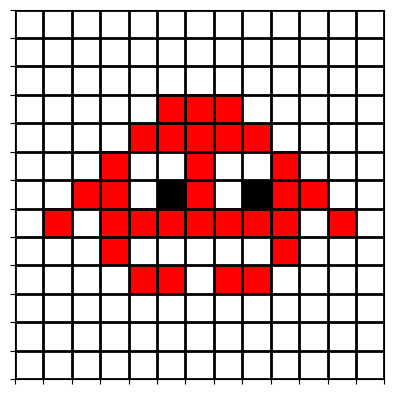}
    \caption{A basic pixel image of a crab.  This can be encoded as data on a $13\times 13$ grid with $1$'s representing red values and $2$'s representing black values and zeros everywhere else.}
    \label{fig:crab}
\end{figure}
If we represent each pixel with a numerical value, such as by taking the grayscale value, then the image is naturally encoded as data on a grid.  For simplicity sake, we encode the image in Figure \ref{fig:crab} as a matrix of $0$'s, $1$'s and $2$'s.
{\small
$$\left[\begin{array}{ccccccccccccc}
0&0&0&0&0&0&0&0&0&0&0&0&0\\
0&0&0&0&0&0&0&0&0&0&0&0&0\\
0&0&0&0&0&0&0&0&0&0&0&0&0\\
0&0&0&0&0&1&1&1&0&0&0&0&0\\
0&0&0&0&1&1&1&1&1&0&0&0&0\\
0&0&0&1&0&0&1&0&0&1&0&0&0\\
0&0&1&1&0&2&1&0&2&1&1&0&0\\
0&1&0&1&1&1&1&1&1&1&0&1&0\\
0&0&0&1&0&0&0&0&0&1&0&0&0\\
0&0&0&0&1&1&0&1&1&0&0&0&0\\
0&0&0&0&0&0&0&0&0&0&0&0&0\\
0&0&0&0&0&0&0&0&0&0&0&0&0\\
0&0&0&0&0&0&0&0&0&0&0&0&0
\end{array}\right].$$
}
Now imagine that we only had the \textit{autocorrelations} of the data of order up to three.
Using the formula in Step 1 above, we calculated the value of $\wh f(e_j)^{2^{12}-1}$ to very high precision for $e_1 =(1,0)$ and $e_2 = (0,1)$.
$$\wh f(e_1)^{4095} = 1.78868\cdot 10^9 - 1.39674\cdot 10^{10}i$$
$$\wh f(e_2)^{4095} = 1.49619\cdot 10^9 + 8.68233\cdot 10^8i.$$
The actual values stored in the computer were calculated using very high precision with Mpmath in Python \cite{mpmath}.

For Step 2, we computed candidates for $\alpha_1$ and $\alpha_2$ by taking an appropriate root and using the PSLQ algorithm \cite{ferguson} implemented in Mpmath to identify integer coefficients expressing $\alpha_1$ and $\alpha_2$ as combinations of powers of $\xi_{13}$, as in step $2$.
Note that computing a $(2^{12}-1)$'th root initially gives a value outside the field $\mathbb Q(\xi_{13})$, but this can be fixed by multiplying by an appropriate $(2^{12}-1)$'th root of unity.
The issues of identifying the linear combination or ending up in the wrong field could have instead been resolved by doing exact-integer arithmetic with an explicit computer implementation of the cyclotomic field extension and being careful when obtaining the root.

For our final step, we define a function
$$\ol{\wh g((a,b))} = \frac{\wh M_{3}(f; ae_1, be_2)}{\sigma_a(\alpha_1)\sigma_b(\alpha_2)},$$
for each nonzero value of $a$ and $b$.
Furthermore, we define
$$\ol{\wh g((a,0))} = \frac{\wh M_{2}(f; ae_1)}{\sigma_a(\alpha_1)},\ \ \ \ol{\wh g((0,b))} = \frac{\wh M_{2}(f; be_2)}{\sigma_b(\alpha_2)},$$
and
$$\wh g((0,0)) = M_1(f).$$
Taking the inverse Fourier transform, we obtained a cyclically shifted version of the same matrix we started with. 
\end{ex}

\subsection{When things go wrong}
The problem of recovering $\wh f$ from its moments becomes much more complicated when $\wh f$ vanishes at various points.
In particular, many of the expressions we used above would no longer make sense due to divide-by-zero issues.
In this situation, autocorrelations of order up to $3r+3$ may be required to uniquely determine $\wh f$ up to a global phase.

The value of $\wh f(x)^n$ for some positive integer $n$ can be easily determined by the autocorrelations up to order $6$ (see Theorem \ref{thm:recon-oddpow} and Theorem \ref{thm:recon-evenpow}).
Therefore the remaining issue of finding $\wh f$ is one of determining $\wh f$ from $\wh f^n$, amounts to choosing the right $n$'th root of the expression we can calculate.  Put simply, we can get the value of $\wh f$ at each point up to a multiple by an $n$'th root of unity, which we call the \textbf{local phase}.
In this way, the heart of the difficulty in reconstructing $f$, or equivalently, $\wh f$ is in determining the local phases.

In our basic algorithm, determining a consistent local phases was easy since we assumed we had nonzero values on a simple set of generators for our group, ie. $e_1,\dots, e_r$.
When we don't, then we need to work with not-so-simple generators and things get delicate.  In this case, we must consider the prime factorization of the exponent $N=\exp(G)$ of $G$
$$N=p_1^{m_1}p_2^{m_2}\dots p_s^{m_s}.$$
For each $k$, we let $b_k = N/p_k^{m_k}$ and consider the quotient group $G/H_k$ for $H_{k} = \{x\in G: b_kx = 0\}$.
Since this is a $p_k$-group, we have control over the number of generators $r_k$ in minimal generating sets (see Lemma \ref{lem:prime generating}).  In particular, for each $k$ we can choose a set $x_{k1},\dots, x_{kr_k}$ of elements in the support of $\wh f$, which generate $G$ modulo $H_k$, where $r_k\leq r$.

Finally, we can carefully construct the local phase function over each quotient. We choose $\alpha_{k1},\dots,\alpha_{kr_k}$ analogous to Step 2 above satisfying
$$\alpha_{kj}^n = \wh f(x_{kj})^n$$
for some positive integer $n$.
This must be done delicately, so that the local phases of the $\alpha_{kj}$'s are \emph{aligned}.
By this we mean that any relations between the generators $x_{k1},\dots, x_{kr_k}$ modulo $H_k$ need to translate to analogous relations between the $\alpha_{kj}$'s.  For details, see the discussion in Section \ref{sec:reconstruction}, and in particular what we call the \textbf{remodulation condition}. 
Then we can use a formula akin to the one in Step 3 to reconstruct $\wh f(x)^{b_k}$ up to a global phase. Since the greatest common divisor of $b_1,\dots,b_s$ is $1$, they may be combined together to obtain $\wh f(x)$ up to a global phase.
This more complicated version of the algorithm is featured in Theorem \ref{thm:finalconstruction} below. 

\subsection{Group notation}
We will now introduce notation that will be used throughout the paper.  In particular, we will write $\mathbb Z/a\mathbb Z$ to mean the set
$$\mathbb Z/a\mathbb Z = \{0,1,2,\dots,a-1\},$$
equipped with the operation of addition modulo $a$.
This is exactly a one-dimensional grid with $a$ points and addition performed periodically over the boundary.
More generally, we will use
$$G = (\mathbb Z/a_1\mathbb Z)\times (\mathbb Z/a_2\mathbb Z)\times\dots\times (\mathbb Z/a_r\mathbb Z)$$
to represent an $r$-dimensional grid of points with $a_1$ many points in the first direction, $a_2$ in the second direction, and so on.
An element $x\in G$ is represented by an $r$-tuple $(x[1],x[2],\dots, x[r])$, and addition is done coordinate-wise and modulo the number of points in the given direction:
$$x+y = (x[1]+y[1]\mod a_1,\dots, x[r]+y[r]\mod a_r).$$
As above, we will define $e_k=(0,\dots, 1,\dots, 0)$ to be the vector with a $1$ in the $k$'th entry and zeros elsewhere.  

By a fundamental result in group theory called the Structure Theorem for Finite Abelian Groups, we can also assume $a_1$ divides $a_2$, that $a_2$ divides $a_3$, and so on.
In this situation, the $a_j$'s are called the \textbf{invariant factors} of $G$ and the last one $a_r$ is equal to the exponent $\exp(G)$ of $G$.  The number of invariant factors measures the smallest dimension our grid can be embedded into in a way that is compatible with the addition operation.
\section{Discrete Fourier transforms}
Our primary tool for studying the determinacy of autocorrelations on Abelian groups for functions is the discrete Fourier transform.

\subsection{Discrete Fourier transforms and autocorrelations}
\begin{defn}
Let $G$ be a finite Abelian group.
The (non-normalized) \emph{discrete Fourier transform} of a function $f: G\rightarrow\mathbb{C}$ is the function $\wh f: G\rightarrow\mathbb{C}$ defined by
$$\wh f(x) = \sum_{y\in G} f(y)\exp\left(-2\pi i \sum_{k=1}^r\frac{x[k]\cdot y[k]}{a_k}\right).$$
\end{defn}
More generally, one can define Fourier transforms on locally compact Abelian groups, compact non-Abelian groups, and in general so-called amenable topological groups.
Detailed references for the dual group can be found in \cite{hewitt} or \cite{fuchs}.

One can recover a function from its discrete Fourier transform via the inverse discrete Fourier transform
$$
f(x) = \frac{1}{|G|}\sum_{y\in G}\wh f(y)\exp\left(2\pi i \sum_{k=1}^r\frac{x[k]\cdot y[k]}{a_k}\right).$$
In particular, the discrete Fourier transform defines an isomorphism between the vector spaces of complex-valued functions on $G$ to itself.
For convenience, we will write $\chi(x,y)$ to represent the exponential above, ie. 
$$\chi(x,y) = \exp\left(2\pi i \sum_{k=1}^r\frac{x[k]\cdot y[k]}{a_k}\right).$$
Very often, we will rely on the orthogonality of sums of roots of unity
\begin{equation}\label{eqn:orthogonality}
\sum_{x\in G}\chi(x,y)\overline{\chi(x,z)} = |G|\delta_{y,z}\quad\text{for all}\ y,z\in G,
\end{equation}
where here $\delta_{y,z}$ is the Kronecker delta function.

The main utility of the discrete Fourier transform in our setting comes from the fact that it relates autocorrelations of functions to products of the function's Fourier transform, as shown in the following Proposition.
\begin{prop}\label{prop:moment2fourier}
Let $f$ be a function on a finite Abelian group $G$.  The $n$'th autocorrelation
$$M_n(f;x_1,\dots, x_{n-1}) = \sum_{y\in G}f(y)f(y+x_1)\dots f(y+x_{n-1}).$$
of $f$ can be expressed in terms of the discrete Fourier transform as
\begin{align*}
\frac{1}{|G|^{n-1}}\sum_{y_1+\dots+y_n=0}\wh f(y_1)\dots \wh f(y_n)\chi(x,y_1)\dots\chi(x,y_n),
\end{align*}
where the sum on the right is taken over all $n$-tuples $(y_1,\dots,y_n)$ which sum to $0$.
\end{prop}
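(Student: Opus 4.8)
The plan is to substitute the Fourier inversion formula into the definition of $M_n$ and then collapse the resulting interior sum over $G$ using the orthogonality relation \eqref{eqn:orthogonality}. It is convenient to set $x_0 = 0$, so that the defining expression becomes $M_n(f;x_1,\dots,x_{n-1}) = \sum_{y\in G}\prod_{j=0}^{n-1} f(y+x_j)$, treating the leading factor $f(y)$ as the shift of $f$ by $x_0$.

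First I would expand each factor via $f(z) = |G|^{-1}\sum_{w\in G}\wh f(w)\,\chi(z,w)$, introducing a fresh summation variable $y_j\in G$ for the $j$'th factor, and apply the multiplicativity $\chi(y+x_j,y_j) = \chi(y,y_j)\,\chi(x_j,y_j)$, which is immediate from the definition of $\chi$ as the exponential of the relevant bilinear form (the reduction mod $a_k$ in the group addition only changes the exponent by an integer multiple of $2\pi i$). This rewrites $M_n$ as
$$\frac{1}{|G|^{n}}\sum_{y_0,\dots,y_{n-1}\in G}\Big(\prod_{j=0}^{n-1}\wh f(y_j)\,\chi(x_j,y_j)\Big)\sum_{y\in G}\chi\big(y,\;y_0+\cdots+y_{n-1}\big).$$

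Next I would evaluate the inner sum $\sum_{y\in G}\chi(y,\,y_0+\cdots+y_{n-1})$. Since $\chi(y,0)=1$, this is the special case $z=0$ of \eqref{eqn:orthogonality}, hence it equals $|G|$ when $y_0+\cdots+y_{n-1}=0$ and vanishes otherwise. Substituting this cancels one power of $|G|$ and restricts the outer sum to the $n$-tuples $(y_0,\dots,y_{n-1})$ that sum to zero, giving $\frac{1}{|G|^{n-1}}\sum_{y_0+\cdots+y_{n-1}=0}\prod_{j=0}^{n-1}\wh f(y_j)\,\chi(x_j,y_j)$. Because $x_0=0$ forces $\chi(x_0,y_0)=1$, relabeling $(y_0,\dots,y_{n-1})$ as $(y_1,\dots,y_n)$ yields exactly the asserted formula (with the constraint read as $y_1+\dots+y_n = 0$, as the surrounding text indicates).

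There is no genuine obstacle here; the content is a direct computation from Fourier inversion and orthogonality. The step requiring the most care is bookkeeping the $n$ independent Fourier variables and invoking \eqref{eqn:orthogonality} in its degenerate form $z=0$. One should also flag that the symbol $\chi(x,y_j)$ in the statement is shorthand for the phase $\chi(x_j,y_j)$ attached to the $j$'th shifted factor (with $x_0 = 0$ contributing a trivial factor), so that the relabeling at the end loses no information.
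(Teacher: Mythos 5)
Your proposal is correct and follows essentially the same route as the paper: expand each factor by Fourier inversion, use multiplicativity of $\chi$ to separate the $y$-dependence, and collapse the inner sum over $y$ via orthogonality to impose the zero-sum constraint, the only cosmetic difference being that you index the trivial shift as $x_0=0$ where the paper uses $x_n=0$. Your remarks correcting the statement's typos (the constraint should read $y_1+\dots+y_n=0$ and the phases should be $\chi(x_j,y_j)$) match the paper's intended reading.
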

\begin{proof}
For every complex-valued function $f$ on $G$, we may write
$$f(x) = \frac{1}{|G|}\sum_{y\in G}\wh f(y)\chi(x,y).$$
Therefore for $x_n=0$
\begin{align*}
M_n(f;x_1,\dots,x_{n-1}) & = \sum_{y\in G}f(y)f(y+x_1)\dots f(y+x_{n-1})\\
  & = \frac{1}{|G|^n}\sum_{y,y_1,\dots,y_n\in G}\prod_{j=1}^n\wh f(y_j)\chi(x_j+y,y_j)\\  
  & = \frac{1}{|G|^{n-1}}\sum_{y_1+\dots+y_n = 0}\prod_{j=1}^n\wh f(y_j)\chi(x_j,y_j).
\end{align*}
\end{proof}

The previous Proposition allows us to encode the condition of the $n$'th autocorrelations agreeing as a condition on the discrete Fourier transforms.

\begin{cor}
Let $f$ and $g$ be a functions on a finite Abelian group $G$. Then the $n$'th autocorrelations agree
$$M_n(f;x_1,\dots, x_{n-1}) = M_n(g;x_1,\dots, x_{n-1})$$
for all $x_1,\dots,x_{n-1}\in G$ if and only if 
$$\wh f(x_1)\dots \wh f(x_n) = \wh g(x_1)\dots \wh g(x_n)$$
for all $x_1,\dots,x_n\in G$ with $x_1+\dots+x_n=0$.
\end{cor}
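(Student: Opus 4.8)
The plan is to deduce the corollary directly from Proposition \ref{prop:moment2fourier} together with the invertibility of the discrete Fourier transform on the product group $G^{n-1}$. The ``if'' direction is immediate: if $\wh f(x_1)\cdots\wh f(x_n)=\wh g(x_1)\cdots\wh g(x_n)$ for every $n$-tuple summing to $0$, then substituting term by term into the formula of Proposition \ref{prop:moment2fourier} yields $M_n(f;x_1,\dots,x_{n-1})=M_n(g;x_1,\dots,x_{n-1})$ for all $x_1,\dots,x_{n-1}$. So the work is in the ``only if'' direction.

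For that direction, I would reparametrize the constrained sum in Proposition \ref{prop:moment2fourier}: eliminate $y_n=-(y_1+\dots+y_{n-1})$, and use that the $n$-th argument of the autocorrelation is $0$ so that $\chi(0,y_n)=1$. The formula then reads $M_n(f;x_1,\dots,x_{n-1})=|G|^{-(n-1)}\sum_{y_1,\dots,y_{n-1}\in G}\Psi_f(y_1,\dots,y_{n-1})\prod_{j=1}^{n-1}\chi(x_j,y_j)$, where
$$\Psi_f(y_1,\dots,y_{n-1}) := \wh f(y_1)\cdots\wh f(y_{n-1})\,\wh f\!\left(-(y_1+\dots+y_{n-1})\right).$$
The right-hand side is exactly the $(n-1)$-dimensional inverse discrete Fourier transform of $\Psi_f$ as a function on $G^{n-1}$. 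Since the discrete Fourier transform is a vector-space isomorphism on functions on $G^{n-1}$ — equivalently, by applying the orthogonality relation \eqref{eqn:orthogonality} coordinate by coordinate to extract $\Psi_f$ from the autocorrelation data — the equality of the functions $M_n(f;\cdot)$ and $M_n(g;\cdot)$ on $G^{n-1}$ forces $\Psi_f=\Psi_g$ pointwise. Undoing the reparametrization, i.e.\ reading off $\Psi_f=\Psi_g$ at $(y_1,\dots,y_{n-1})$ and setting $y_n:=-(y_1+\dots+y_{n-1})$, gives $\wh f(y_1)\cdots\wh f(y_n)=\wh g(y_1)\cdots\wh g(y_n)$ for every $(y_1,\dots,y_n)$ with $y_1+\dots+y_n=0$, which is the claim.

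I expect no serious obstacle here; the only point needing care is the bookkeeping of the constrained sum. One must verify that letting $(y_1,\dots,y_{n-1})$ range freely over $G^{n-1}$ and defining $y_n:=-(y_1+\dots+y_{n-1})$ is a bijection onto the set of $n$-tuples summing to $0$, and that the characters appearing factor as products of characters on the individual copies of $G$, so that \eqref{eqn:orthogonality} can indeed be applied on the product group $G^{n-1}$. Both are routine, so the content of the corollary is essentially a restatement of Proposition \ref{prop:moment2fourier} modulo Fourier inversion.
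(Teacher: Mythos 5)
Your argument is correct and follows essentially the same route as the paper: the ``if'' direction is direct substitution into Proposition \ref{prop:moment2fourier}, and the ``only if'' direction is exactly the paper's step of applying the orthogonality relation \eqref{eqn:orthogonality} coordinate by coordinate (equivalently, Fourier inversion on $G^{n-1}$) to extract $\wh f(y_1)\cdots\wh f(y_n)$ from the autocorrelation data. The reparametrization bookkeeping you flag is routine and poses no issue.
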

\begin{proof}
By Proposition \ref{prop:moment2fourier}, if $\wh f(y_1)\dots \wh f(y_n) = \wh g(y_1)\dots \wh g(y_n)$ for all $y_1,\dots,y_n\in G$ with $y_1+\dots+y_n=0$, then the autocorrelations agree.
Conversely, suppose that the autocorrelations agree.  If $y_1,\dots,y_n\in G$, then by the orthogonality of characters
\begin{align*}
\sum_{x_1,\dots,x_{n-1}\in G} M_n (f;x_1,\dots,x_{n-1})\prod_{j=1}^{n-1}\ol{\chi(x_j,y_j)} = \wh f(y_1)\wh f(y_2)\dots\wh f(y_n). 
\end{align*}
Therefore if the $n$'th autocorrelations of $f$ and $g$ agree, then $\wh f(x_1)\dots \wh f(x_n) = \wh g(x_1)\dots \wh g(x_n)$.
\end{proof}

Using this result, we can prove that the third-order autocorrelations are sufficient to determine a function up to its phase when the Fourier transform is non-vanishing.

\begin{thm}\label{thm:basictriple}
Let $G$ be a finite Abelian group and suppose that $f$ and $g$ are two functions on $G$.
If either $\wh f$ or $\wh g$ is never zero and both agree up to the third autocorrelation, then there exists $y\in G$ satisfying
$$g(x) = f(x+y),\quad\text{for all $x\in G$}.$$
\end{thm}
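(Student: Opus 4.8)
The plan is to pass to the Fourier side via the Corollary above, form the quotient $h = \wh g/\wh f$, show that $h$ is a character of $G$, and then invert the discrete Fourier transform.

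Without loss of generality, assume $\wh f$ is nowhere zero. By the Corollary, the agreement of the first three autocorrelations of $f$ and $g$ is equivalent to the three families of relations: $\wh f(0) = \wh g(0)$ (order one); $\wh f(x)\wh f(-x) = \wh g(x)\wh g(-x)$ for all $x\in G$ (order two); and $\wh f(x_1)\wh f(x_2)\wh f(x_3) = \wh g(x_1)\wh g(x_2)\wh g(x_3)$ for all $x_1,x_2,x_3\in G$ with $x_1+x_2+x_3=0$ (order three). Specializing the order-three identity to $x_1 = a$, $x_2 = -a$, $x_3 = 0$ and using that the left-hand side is nonzero shows $\wh g(a)\neq 0$ for every $a$, so $\wh g$ is also nowhere zero and $h(x) := \wh g(x)/\wh f(x)$ is a well-defined map $G\to\mathbb C\setminus\{0\}$.

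Next I would check that $h$ is a group homomorphism. The order-one relation gives $h(0) = 1$. Dividing the order-three relation through by $\wh f(x_1)\wh f(x_2)\wh f(x_3)$ gives $h(x_1)h(x_2)h(x_3) = 1$ whenever $x_1+x_2+x_3 = 0$; taking $x_2 = 0$ yields $h(x)h(-x) = 1$, and then the general relation with $x_3 = -(x_1+x_2)$ rearranges to $h(x_1+x_2) = h(x_1)h(x_2)$. Since $G$ is finite, $h(x)^{\exp(G)} = h(\exp(G)\,x) = h(0) = 1$, so $h$ takes values in roots of unity, i.e. $h$ is a character of $G$. Then I would invoke the standard identification of $\wh G$ with $G$: a character is determined by its values $h(e_k)$, each an $a_k$-th root of unity and hence of the form $\exp(2\pi i\,y[k]/a_k)$, so that $h(x) = \chi(x,y)$ for $y = (y[1],\dots,y[r])\in G$. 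Finally, feeding $\wh g(x) = \chi(x,y)\wh f(x)$ into the inverse discrete Fourier transform and using the identities $\chi(x,z) = \chi(z,x)$ and $\chi(z,x)\chi(z,y) = \chi(z,x+y)$ gives $g(x) = f(x+y)$, as desired.

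The computations here are all elementary, so there is no serious obstacle; the one point that warrants a word is the passage from ``$h$ multiplicative and nowhere zero'' to ``$h = \chi(\cdot,y)$ for some $y\in G$'', which is exactly where the finiteness of $G$ (through its product decomposition) is genuinely used, and which fails in the continuous settings alluded to in the introduction.
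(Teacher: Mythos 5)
Your proposal is correct and follows essentially the same route as the paper: pass to the Fourier side, deduce that $\wh g$ is nowhere zero, show that the quotient $\wh g/\wh f$ is multiplicative and hence a character $\chi(\cdot,y)$, and invert the transform. The only cosmetic difference is that you derive the nonvanishing of $\wh g$ from the order-three relation with a zero slot, while the paper uses the order-two relation; both are valid.
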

\begin{proof}
Suppose that $f$ and $g$ have the same autocorrelations, up to order three.
This in particular means that for any $x\in G$,
$$\wh f(x)\wh f(-x) = \wh g(x)\wh g(-x).$$
Thus if $\wh f$ is never zero, then $\wh g$ is never zero and vice versa.  Hence both $\wh f$ and $\wh g$ are never zero and we can write
$$\frac{\wh g(-x)}{\wh f(-x)} = 
 \left(\frac{\wh g(x)}{\wh f(x)}\right)^{-1}.$$
Furthermore, for any $x_1,x_2\in  G$ we have
$$\wh f(-x_1)\wh f(-x_2)\wh f(x_1+x_2) = \wh g(-x_1)\wh g(-x_2)\wh g(x_1+x_2).$$
Therefore
$$\frac{\wh g(x_1+x_2)}{\wh f(x_1+x_2)} = \frac{\wh f(-x_1)\wh f(-x_2)}{\wh g(-x_1)\wh g(-x_2)} = \frac{\wh g(x_1)\wh g(x_2)}{\wh f(x_1)\wh f(x_2)}.$$
This shows that $\wh g/\wh f: G\rightarrow\mathbb C$ is a group character.
Consequently there exists $y\in G$ with
$\frac{\wh g(x)}{\wh f(x)} = \chi(x,y)$ for all $y\in G$.
It follows that 
$$\wh g(x) = \wh f(x)\chi(x,y),\quad\text{for all}\ x\in G,$$
and therefore $g(x) = f(x+y)$.
\end{proof}
Note in particular that in the previous theorem we made no assumptions about the values of $f$ and $g$.
The more interesting setting happens when $\wh f$ or $\wh g$ vanishes on some of the elements of $G$.
In this case, without knowledge about the values of $f$ and $g$ (eg. real, rational, etc), the autocorrelations of both $f$ and $g$ must agree up to a potentially very large order, as demonstrated by the next two examples.

\begin{remk}
Note that Theorem \ref{thm:basictriple} is not new, and has been discovered and rediscovered multiple times in the literature.  The oldest source where it is explicitly stated appears to be Adler and Konheim \cite{adler} in the broader context of locally compact Abelian groups, which include finite Abelian groups as a special case.  It was extended to non-Abelian groups using representation theory in Kakarala's thesis \cite{kakarala}; see also \cite{kakarala2}.
\end{remk}

We establish two fairly different lower bounds on the number of required autocorrelations for arbitrary functions.
Specifically, we need at least $a_r$ autocorrelations, as well as at least $(a_1+a_2+\dots+a_r)/d$ autocorrelations where $d>1$ divides $a_1$.

\begin{ex}
Let $f$ be the function whose discrete Fourier transform satisfies $\wh f(e_r) = 1$ and $\wh f(x)=0$ otherwise.
The autocorrelations of $f$ are zero up to order $\exp(G)-1$.
Since $f$ is not identically zero, $g\equiv 0$ has the same autocorrelations but is not a translation of $f$.  Thus we need autocorrelations up to order at least $\exp(G)$ to determine the value of $g$ up to translation.
\end{ex}

\begin{ex}
Let $d>1$ be a divisor of $a_1$, and let $f$ and $g$ be the functions whose discrete Fourier transforms satisfy
$$\wh f(e_k) = \wh g(e_k) = 1\quad\text{for all}\ 1\leq k\leq r,$$
and also
$$\wh f(a_1/d,\dots,a_r/d)=1\quad\text{and}\quad \wh g(a_1/d,\dots,a_r/d)=e^{2\pi i/d}$$
and $\wh f(\chi)=\wh g(\chi) = 0$ otherwise.
Suppose $x_1,\dots, x_m \in G$ with $x_1+\dots+x_m = 1$ for some $m\leq (a_1+a_2+\dots+a_r)/d$.
Since $\supp(\wh f) = \supp(\wh g)$, if any of the $x_j$'s lie outside their mutual support, then the autocorrelations $\wh f(x_1)\dots\wh f(x_m)$ and $\wh g(x_1)\dots\wh g(x_m)$ will agree.
Otherwise, if we let 
$$m_0 = \#\{j: x_j=(a_1/d,\dots, a_r/d)\}$$
and
$$m_j = \#\{j: x_j=e_j\},$$
then evaluating $x_1+\dots+x_m=1$ on the element $x=(0,\dots, 1,\dots, 0)\in G$ with a $1$ in the $j$'th position and zeros elsewhere implies $m_j+(a_j/d)m_0=0\mod a_j$. If $m_0$ is not a multiple of $d$, then this means $m_j$ is a positive multiple of $a_j/d$.  In particular, $m_j\geq a_j/d$ for all $j$, so that $m = m_0+\dots+m_r > (a_1+\dots+a_r)/d$ which is a contradiction.  Otherwise,
$$\wh f(x_1)\dots\wh f(x_m) = 1 = \exp(2\pi i m_0/d) = \wh g(x_1)\dots\wh g(x_m).$$
Thus the autocorrelations of $f$ and $g$ agree up to $(a_1+\dots+a_r)/d$.

If $g(x) = f(x+y)$ for some $y\in G$, then $\wh g(x) = \wh f(x)\chi(x,y)$ for all $x\in G$.
This means that $1= \wh g(e_k)=\wh f(e_k)\chi(e_k,y) = e^{2\pi iy_k}$ and therefore $y=(0,\dots, 0)\in G$.
This implies $\wh f=\wh g$, which contradicts the values of $\wh{f}(a_1/d,\dots, a_r/d) \textnormal{ and } \wh{g}(a_1/d,\dots, a_r/d)$.
Thus $g$ is not a translation of $f$.
This shows that at least $(a_1+a_2+\dots+a_r)/d$ autocorrelations are needed to determine an arbitrary function up to translation.
\end{ex}

\subsection{Discrete Fourier transforms and field automorphisms}
When functions are restricted to have their values appear only in certain fields, their discrete Fourier transforms exhibit interesting symmetries.
For example, it is well-known that if $f: G\rightarrow\mathbb{C}$ is a real-valued function then
$$\wh f(-x) = \overline{\wh f(x)},$$
where $\overline z$ represents the complex conjugate of $z$.  

If $f$ is constrained to take only rational values, the number of symmetries that one can observe in the transformed function grows substantially.
This comes from the fact that in such a setting $\wh f$ takes values in a so-called \emph{cyclotomic field extension} of the rationals $\mathbb Q(\xi_N)$.
This is the smallest field containing all the rational numbers along with the primitive $N$'th root of unity $\xi_N=e^{2\pi i/N}$, and is given by
$$\mathbb Q(\xi_N) = \left\lbrace\sum_{k=0}^{N-1}a_k\xi_N^k: a_0,\dots, a_{N-1}\in \mathbb Q \right\rbrace.$$

To understand a field like $\mathbb{Q}(\xi_N)$, the natural objects to study turn out to be the ``symmetries" of the field, ie. the so-called \emph{field automorphisms}.
These are the bijective functions $\sigma: \mathbb{Q}(\xi_N)\rightarrow \mathbb{Q}(\xi_N)$ which are simultaneously additive
$$\sigma(z+w) = \sigma(z)+\sigma(w),\quad \text{for all}\ z,w\in \mathbb{Q}(\xi_N),$$
as well as multiplicative
$$\sigma(zw) = \sigma(z)\sigma(w),\quad \text{for all}\ z,w\in \mathbb{Q}(\xi_N).$$

The field automorphisms of the cyclotomic field extension $\mathbb Q(\xi_N)$ are parameterized by the set
\begin{equation}\label{eqn:ring of units}
(\mathbb Z/N\mathbb Z)^\times = \{a\in\mathbb Z/N\mathbb Z: \gcd(a,N)=1\}
\end{equation}
of elements of $\mathbb Z/N\mathbb Z$ relatively prime to $N$.
This is called the \emph{multiplicative group of units} of the ring $\mathbb Z/N\mathbb Z$.
For each $a\in \mathbb Z/N\mathbb{Z}$ there exists a unique field automorphism $\sigma_a$ defined by Equation \eqref{eqn:automorphism}, ie. 
$$\sigma_a\left(\sum_{k=0}^{N-1}c_k\xi_N^k\right) = \sum_{k=0}^{N-1}c_k\xi_N^{ak}.$$
Moreover, every field automorphism of $\mathbb Q(\xi_N)$ appears as one of these.

Using the field automorphisms of $\mathbb Q(\xi_N)$, we can describe all the symmetries satisfied by discrete Fourier transforms of rational functions.
\begin{thm}\label{thm:galois}
Suppose $G$ is a finite Abelian group, and let  $N=\exp(G)$ and $\xi_N$ be a primitive $N$'th root of unity.  Assume $f: G\rightarrow\mathbb{C}$ is a function whose values lie in $\mathbb{Q}(\xi_N)$.
Then $f$ is rational-valued if and only if
$$\wh f(ax) = \sigma_a(\wh f(x))$$
for all $x\in G$ and $a\in (\mathbb{Z}/N\mathbb{Z})^\times$.
\end{thm}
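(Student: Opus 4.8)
The plan is to prove both implications by applying a field automorphism $\sigma_a$ directly to the (inverse) discrete Fourier transform formula and tracking how it acts on the character values $\chi(x,y)$. The whole argument hinges on one elementary observation about these character values, which I would record first.

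\emph{Step 1 (characters are $N$-th roots of unity).} Since $N=\exp(G)$ is the least common multiple of $a_1,\dots,a_r$, each $a_k$ divides $N$, so
$\chi(x,y)=\prod_{k=1}^r\exp\!\big(2\pi i\,x[k]y[k]/a_k\big)=\xi_N^{\,c}$
where $c=\sum_{k=1}^r x[k]y[k](N/a_k)\in\mathbb Z$. Hence $\chi(x,y)\in\mathbb Q(\xi_N)$ and, by the definition \eqref{eqn:automorphism} of $\sigma_a$, $\sigma_a(\chi(x,y))=\xi_N^{\,ac}=\chi(ax,y)=\chi(x,ay)$ for every $a\in(\mathbb Z/N\mathbb Z)^\times$. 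The same holds for $\overline{\chi(x,y)}=\chi(x,-y)$.

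\emph{Step 2 (``only if'').} Assume $f$ is rational-valued. Then in $\wh f(x)=\sum_{y\in G}f(y)\,\overline{\chi(x,y)}$ every summand lies in $\mathbb Q(\xi_N)$, so I may apply $\sigma_a$ termwise. Since $\sigma_a$ fixes each rational $f(y)$ and, by Step 1, carries $\overline{\chi(x,y)}$ to $\overline{\chi(ax,y)}$, I obtain $\sigma_a(\wh f(x))=\sum_{y\in G}f(y)\,\overline{\chi(ax,y)}=\wh f(ax)$, which is the claimed symmetry.

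\emph{Step 3 (``if'').} Conversely, assume $\wh f(ax)=\sigma_a(\wh f(x))$ for all $x\in G$ and all $a\in(\mathbb Z/N\mathbb Z)^\times$. Using the hypothesis that $f$ takes values in $\mathbb Q(\xi_N)$ (equivalently, that $\wh f$ does), apply $\sigma_a$ to the inversion formula $f(x)=\frac1{|G|}\sum_{y\in G}\wh f(y)\chi(x,y)$; by Step 1 and the assumed symmetry this gives $\sigma_a(f(x))=\frac1{|G|}\sum_{y\in G}\wh f(ay)\chi(ax,y)$. The key point — and the only spot requiring care — is that $y\mapsto ay$ is a bijection of $G$: this is exactly where $\gcd(a,N)=1$ together with $N=\exp(G)$ enters, since choosing $b$ with $ab\equiv1\pmod N$ gives $y=b(ay)$, so the map is injective and hence bijective on the finite group $G$. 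Reindexing by $z=ay$ collapses the sum back to $f(x)$, so $\sigma_a(f(x))=f(x)$ for every $a\in(\mathbb Z/N\mathbb Z)^\times$. Since (as noted before the theorem) every automorphism of $\mathbb Q(\xi_N)$ is some $\sigma_a$, the element $f(x)$ is fixed by the entire Galois group $\mathrm{Gal}(\mathbb Q(\xi_N)/\mathbb Q)$, hence lies in the fixed field $\mathbb Q$. Thus $f$ is rational-valued, completing the proof. I do not expect any real obstacle here; the two nontrivial ingredients are the reindexing bijection (genuinely using the exponent, not merely a common multiple of the $a_k$) and the appeal to the fundamental theorem of Galois theory for $\mathbb Q(\xi_N)/\mathbb Q$.
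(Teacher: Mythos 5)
Your proof is correct and follows essentially the same route as the paper: both arguments rest on the identity $\sigma_a(\chi(x,y))=\chi(ax,y)=\chi(x,ay)$, the bijectivity of $y\mapsto ay$ for $a\in(\mathbb Z/N\mathbb Z)^\times$, and the fundamental theorem of Galois theory for $\mathbb Q(\xi_N)/\mathbb Q$. The only (harmless) difference is in the ``only if'' direction, where you apply $\sigma_a$ directly to the forward transform $\wh f(x)=\sum_{y}f(y)\overline{\chi(x,y)}$ rather than, as the paper does, to the inversion formula followed by an appeal to orthogonality of characters; your version is marginally more direct.
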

\begin{proof}
Using the discrete Fourier transform, we write
$$f(x) = \frac{1}{|G|}\sum_{y\in G}\wh f(y)\chi(x,y).$$
Moreover, for any $x,y\in G$ the value of $\chi(x,y)$ is an $N$'th root of unity for all $x\in G$ and thus $\sigma_a(\chi(x,y)) = \chi(x,y)^a$.

If $f(x)$ is rational-valued, then for any $a\in (\mathbb{Z}/N\mathbb{Z})^\times$, we have $\sigma_a(f(x))= f(x)$.
Therefore
$$\sum_{y\in G}\wh f(y)\chi(x,y) = \sum_{y\in G}\sigma_a(\wh f(y))\chi(x,ay),$$
where we have used the fact that
$$\chi(ax,y) = \chi(x,y)^a = \chi(x,ay).$$
Orthogonality of sums of roots of unity (Equation \eqref{eqn:orthogonality}) then implies that for any $z\in G$
\begin{align*}
|G|\wh f(az) = \sum_{x,y\in G}\wh f(y)\chi(x,y)\ol{\chi(x,az)} &= \sum_{x,y\in G}\sigma_a(\wh f(y))\chi(x,ay)\ol{\chi(x,az)} \\ &= \sum_{y\in G}\sigma_a(\wh f(y))\delta_{ay,az} = |G|\sigma_a(\wh f(z)).
\end{align*}
Conversely,
suppose
$$\wh f(ay)=\sigma_a(\wh f(y))\quad \text{for all}\ y\in G, a\in(\mathbb Z/N\mathbb{Z})^\times.$$
Then for all $a\in(\mathbb Z/N\mathbb{Z})^\times$, we have
\begin{align*}
\sigma_a(f(x)) &= \frac{1}{|G|}\sum_{y\in G}\sigma_a(\wh f(y))\chi(x,ay) \\ &= \frac{1}{|G|}\sum_{y\in G}\wh f(ay)\chi(x,ay)  = \frac{1}{|G|}\sum_{z\in G}\wh f(z)\chi(x,z) = f(x).
\end{align*}
Therefore by the Fundamental Theorem of Galois theory \cite{hungerford}, we know $f(x)$ is rational-valued.
\end{proof}

\section{Reconstruction Algorithm}
The problem of reconstructing $f(x)$ is equivalent to the problem of reconstructing $\wh f(x)$.
Most often in our reconstruction, it will often be more useful to work with the discrete Fourier transform of the higher autocorrelation function, rather than the autocorrelations themselves, 
$$\wh M_n(f;x_1,\dots,x_{n-1}) = \wh f(x_1)\dots \wh f(x_{n-1}) \wh f(-x_1-\dots-x_{n-1}).$$

Moreover, if the support
$$\supp(\wh f) = \{x\in G: \wh f(x)\neq 0\}$$
does not generate the entire group itself, meaning every element of $G$ can be written as the sum a sequence of elements of $\supp(\wh f)$, then we can replace $G$ with the subgroup generated by the support.
Therefore without loss of generality, we will assume in this section that $G$ is generated by $\supp(\wh f)$.

Our reconstruction algorithm comes in two parts: an algorithm for reconstructing $\wh f(x)$ up to local phase, and a follow-up algorithm for determining consistent choices of local phases.
Here by reconstructing $\wh f$ up to local phase, we mean determining the value of $\wh f(x)$ up to multiplication by a root of unity.
This is generally much stronger than knowing the absolute value $|\wh f(x)|$, since a cyclotomic field extension contains many complex numbers with absolute value $1$ which are not roots of unity.
Rather, this is equivalent to determining $\wh f(x)$ to some integer power.
Indeed, determining $|\wh f(x)|$ can be successfully accomplished using only autocorrelations up to order $2$, while determining the value of some power of $\wh f(x)$ in general requires autocorrelations up to order $6$.

\subsection{Reconstruction up to local phase}
The first step to reconstructing a rational-valued function $f(x)$ up to a global phase is to reconstruct a power of $\wh f(x)$.
This in turn determines the value of $\wh f(x)$ itself, up to multiplication by a root of unity, ie. a local phase.

In the case that the exponent of the group $G$ is odd, the value of $\wh f(x)$ to some power is computable using only second and third-order autocorrelations.
The key strategy is to leverage Euler's Theorem, which states that
$$2^{\phi(N)} = 1\mod N,$$
whenever $N$ is odd, where here $\phi$ is Euler's phi function.
Using this, we have the following theorem.
\begin{thm}\label{thm:recon-oddpow}
Let $G$ be a finite Abelian group, and suppose that $N=\exp(G)$ is odd.  Then for any $x\in\supp(\wh f)$, we have
$$
\wh f(x)^{2^{\phi(N)}-1}
   = \prod_{k=0}^{\phi(N)-1}\left(\frac{\wh M_{3}(f;2^kx,2^kx)}{\wh M_2(f;2^{k+1}x)}\right)^{2^{(\phi(N)-1-k)}}.
$$
\end{thm}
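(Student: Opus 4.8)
The plan is to reduce the claimed identity to a short formal computation, using the relation $\wh M_n(f;x_1,\dots,x_{n-1}) = \wh f(x_1)\cdots\wh f(x_{n-1})\wh f(-x_1-\cdots-x_{n-1})$ together with Euler's Theorem. First I would rewrite each factor on the right-hand side. Since $\wh M_3(f;2^kx,2^kx) = \wh f(2^kx)^2\,\wh f(-2^{k+1}x)$ and $\wh M_2(f;2^{k+1}x) = \wh f(2^{k+1}x)\,\wh f(-2^{k+1}x)$, the factor $\wh f(-2^{k+1}x)$ cancels and
$$\frac{\wh M_3(f;2^kx,2^kx)}{\wh M_2(f;2^{k+1}x)} = \frac{\wh f(2^kx)^2}{\wh f(2^{k+1}x)}.$$
Before performing this cancellation I would check that nothing vanishes: because $f$ is rational-valued and $N$ is odd, $2$ is a unit mod $N$, so Theorem \ref{thm:galois} gives $\wh f(2^jx)=\sigma_{2^j}(\wh f(x))$ and $\wh f(-2^jx)=\sigma_{-2^j}(\wh f(x))$; since field automorphisms are injective and $x\in\supp(\wh f)$, every such value is nonzero, so both the quotient and the cancellation are legitimate.

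Next I would abbreviate $a_k := \wh f(2^kx)$ for $0\le k\le\phi(N)$, so the right-hand side becomes $\prod_{k=0}^{\phi(N)-1}\bigl(a_k^2/a_{k+1}\bigr)^{2^{\phi(N)-1-k}}$, and carry out the exponent bookkeeping. For $1\le k\le\phi(N)-1$ the total exponent of $a_k$ is $2\cdot 2^{\phi(N)-1-k}$ coming from the $k$-th factor together with $-2^{\phi(N)-k}$ coming from the $(k-1)$-st factor, and these cancel exactly. What survives is $a_0^{2^{\phi(N)}}$ from the $k=0$ term and $a_{\phi(N)}^{-1}$ from the $k=\phi(N)-1$ term, so the product equals $a_0^{2^{\phi(N)}}\,a_{\phi(N)}^{-1}$.

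Finally, Euler's Theorem gives $2^{\phi(N)}\equiv 1\pmod N$, and since $Nx=0$ in $G$ this forces $2^{\phi(N)}x=x$, hence $a_{\phi(N)}=\wh f(x)=a_0$. Therefore the product equals $a_0^{2^{\phi(N)}-1}=\wh f(x)^{2^{\phi(N)}-1}$, as claimed. I do not expect a serious obstacle here; the only places to be careful are the weighted telescoping of exponents (the weights double at each step, so the collapse must be tracked precisely) and the observation that $x\in\supp(\wh f)$ together with rational-valuedness propagates non-vanishing to every point $2^jx$ and $-2^jx$ occurring in the formula.
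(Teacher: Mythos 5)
Your proposal is correct and follows essentially the same route as the paper: reduce each factor to $\wh f(2^kx)^2/\wh f(2^{k+1}x)$, telescope the weighted product to $\wh f(x)^{2^{\phi(N)}}/\wh f(2^{\phi(N)}x)$, and invoke Euler's Theorem. Your explicit use of Theorem \ref{thm:galois} to justify that $\wh f$ is nonvanishing at every $\pm 2^jx$ is a welcome detail the paper leaves implicit when it asserts $2^kx\in\supp(\wh f)$.
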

\begin{proof}
Since $2^{k}x\in\supp(\wh f)$ for all $x\in\supp\wh f$, the denominator of the product above is nonzero, making the product well-defined.
Furthermore, the product is telescoping, since
\begin{align*}
\prod_{k=0}^{\phi(N)-1}\left(\frac{\wh M_{3}(f;2^kx,2^kx)}{\wh M_2(f;2^{k+1}x)}\right)^{2^{(\phi(N)-1-k)}}
 & = \prod_{k=0}^{\phi(N)-1}\left(\frac{\wh f(2^kx)^2}{\wh f(2^{k+1}x)}\right)^{2^{(\phi(N)-1-k)}}\\
 & = \prod_{k=0}^{\phi(N)-1}\frac{\wh f(2^kx)^{2^{(\phi(N)-k)}}}{\wh f(2^{k+1}x)^{2^{(\phi(N)-1-k)}}}\\
 & = \frac{\wh f(x)^{2^{\phi(N)}}}{\wh f(2^{\phi(N)}x)} =  \wh f(x)^{2^{\phi(N)}-1}.
\end{align*}

\end{proof}
\begin{remk}
Note that we can replace $\phi(N)$ with the order of $2$ in $(Z/N\mathbb Z)^\times$ in the above theorem, ie. the smallest positive integer $\ell$ with $2^\ell = 1\mod N$.  This can sometimes save a lot of computation.
\end{remk}
The reconstruction of a power of $\wh f$ in the case that $N$ is even is significantly more difficult.
If $N$ is not a multiple of either $3$ or $5$, then by using autocorrelations of order $4$ and $6$, one can obtain an expression similar to the one in Theorem \ref{thm:recon-oddpow}, replacing the powers of $2$ with powers of $3$ or $5$.

However, in the case that $N$ is divisible by $2$, the behavior of powers of $2$ modulo $N$ is not as nice, and also $\wh f(2x)$ may be zero, even when $\wh f(x)$ is not, significantly complicating the situation.

In the general setting, the key to our construction comes from a result in additive combinatorics on sumsets that will also play a role in the second part of our reconstruction. 
\begin{thm}[\cite{sander}]\label{thm:sumset}
Let $N$ be a positive integer.  If $N$ is odd, then any value in $\mathbb Z/N\mathbb{Z}$ can be written as a sum of two elements in $(\mathbb{Z}/N\mathbb{Z})^\times$:
$$\mathbb{Z}/N\mathbb{Z} = (\mathbb{Z}/N\mathbb{Z})^\times + (\mathbb{Z}/N\mathbb{Z})^\times.$$
If $N$ is even, then any even element in $\mathbb{Z}/N\mathbb{Z}$ can be written as a sum of two elements in $(\mathbb{Z}/N\mathbb{Z})^\times$:
$$2\mathbb{Z}/N\mathbb{Z} = (\mathbb{Z}/N\mathbb{Z})^\times + (\mathbb{Z}/N\mathbb{Z})^\times,$$
and any odd element in $\mathbb{Z}/N\mathbb{Z}$ can be written as a sum of three elements in $(\mathbb{Z}/N\mathbb{Z})^\times$:
$$(\mathbb{Z}/N\mathbb{Z})\backslash(2\mathbb{Z}/N\mathbb{Z}) = (\mathbb{Z}/N\mathbb{Z})^\times + (\mathbb{Z}/N\mathbb{Z})^\times + (\mathbb{Z}/N\mathbb{Z})^\times.$$
\end{thm}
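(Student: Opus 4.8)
The plan is to reduce everything to prime powers via the Chinese Remainder Theorem and then settle the two cases $p$ odd and $p=2$ by a direct counting argument. Writing $N = p_1^{m_1}\cdots p_s^{m_s}$, the ring isomorphism $\mathbb{Z}/N\mathbb{Z} \cong \prod_i \mathbb{Z}/p_i^{m_i}\mathbb{Z}$ carries units to tuples of units, so an element $c$ is a sum of $k$ units in $\mathbb{Z}/N\mathbb{Z}$ exactly when its image in each factor $\mathbb{Z}/p_i^{m_i}\mathbb{Z}$ is a sum of $k$ units, and a representation $c = u_1 + \cdots + u_k$ can be assembled coordinatewise from representations in the factors. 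Thus it suffices to treat $N = p^m$ a prime power, where the units are precisely the residues not divisible by $p$.

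First I would dispatch the odd prime power case. If $p$ is odd and $c \in \mathbb{Z}/p^m\mathbb{Z}$ is arbitrary, then because $\mathbb{Z}/p\mathbb{Z}$ has at least three elements we may choose $a$ with $a \not\equiv 0$ and $a \not\equiv c \pmod p$; then $a$ and $c - a$ are both units of $\mathbb{Z}/p^m\mathbb{Z}$ and sum to $c$. Hence every element of $\mathbb{Z}/p^m\mathbb{Z}$ is a sum of two units, and, padding via $c = 1 + (c-1) = 1 + a' + b'$ for a two-unit representation $a' + b'$ of $c - 1$, also a sum of three units. The odd-$N$ assertion follows immediately: by CRT every $c \in \mathbb{Z}/N\mathbb{Z}$ is a sum of two units.

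Next I would handle $N = 2^m$, where the units are the odd residues. A sum of two odd numbers is even, and conversely an even $c$ equals $1 + (c-1)$ with both summands odd, so the two-unit sums are exactly the even residues. A sum of three odd numbers is odd, and conversely an odd $c$ equals $1 + 1 + (c-2)$ with all three summands odd, so the three-unit sums are exactly the odd residues. For even $N = 2^m M$ with $M$ odd: an even $c$ is even in the $2$-part and is a sum of two units in every factor, hence a sum of two units in $\mathbb{Z}/N\mathbb{Z}$; an odd $c$ is odd in the $2$-part, so needs three units there, while being a sum of three units in each odd-prime-power factor by the padding remark, hence a sum of three units overall. The impossibility of writing an odd $c$ as a sum of two units is clear from reduction modulo $2$.

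The substance here is the observation that the parity obstruction at the prime $2$ is the \emph{only} obstruction: for odd primes the count leaves at least $p - 2 \geq 1$ admissible choices of $a$, so two units always suffice, whereas at the prime $2$ two units reach exactly the even residues and three are both necessary and sufficient for the rest. The remaining work — the pigeonhole count and the bookkeeping through CRT — is routine, and I do not expect a genuine obstacle beyond organizing the cases cleanly.
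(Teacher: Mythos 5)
Your argument is correct and complete. Note that the paper does not actually prove this statement at all -- it is imported wholesale from the cited reference of Sander -- so your write-up supplies a proof where the paper offers only a citation. The route you take (CRT reduction to prime powers, a one-line pigeonhole count for odd $p$ showing $p-2\geq 1$ admissible choices of $a$, and the parity analysis at $2^m$ identifying the mod-$2$ obstruction as the only one) is the standard and essentially shortest argument for this result; the padding trick $c = 1 + a' + b'$ to upgrade two-unit representations to three-unit ones at the odd primes is exactly the right way to make the coordinatewise assembly go through when $N$ is even and $c$ is odd. The only things worth tightening in a final version are trivial: handle $N=1$ (where the statement is vacuous since the zero ring's unique element is a unit), and state explicitly that a chosen residue $a \bmod p$ lifts to a unit of $\mathbb{Z}/p^m\mathbb{Z}$ because being a unit is detected mod $p$. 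Neither is a gap in substance.
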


Using this theorem, any number $c\in \mathbb Z/N\mathbb{Z}$ has what we call a unital decomposition.
\begin{defn}\label{def:unital}
A \textbf{unital decomposition} of a number $c$ in the ring $\mathbb{Z}/N\mathbb{Z}$ is a triple of numbers $u,v,w$ satisfying
$$c = u+v+w\mod N,$$
where here $u,v,w\in (\mathbb Z/N\mathbb Z)^\times$ if $N$ is even and $c$ is odd, or 
$u,v\in (\mathbb Z/N\mathbb Z)^\times$ and $w=0$ if either $N$ is odd, or both $N$ and $c$ are even.
\end{defn}

For each $k\geq 1$, let
$$2^k=u_k+v_k+w_k\mod N$$
be a unital decomposition of $2^k$ (so obviously $w_k=0$).
For simplicity, we will specifically take $u_1=v_1=1$.
Also if we let $2^m$ is the highest power of $2$ dividing $N$ and $\ell=\phi(N/2^m)$, then Euler's Theorem implies that
$$2^{k+\ell} = 2^k\mod N,$$
for all integers $k\geq m$.
Therefore we may choose
$$u_{k+\ell}=u_k\quad\text{and}\quad v_{k+\ell}=v_k.$$

With these choices in mind, we have the following theorem.
\begin{thm}\label{thm:recon-evenpow}
Let $G$ be a finite Abelian group with $N=\exp(G)$, and let $\ell$, $m$, $u_k$ and $v_k$ be defined as in the previous paragraph.
Then for any $x\in\supp(\wh f)$, we have
$$
\wh f(x)^{2^m(2^{\ell}-1)}
   = \frac{\prod_{k=1}^{\ell+m-1} (\wh B_{k}/\wh A_{k+1})^{2^{\ell+m-1-k}}}{\prod_{k=1}^{m-1} (\wh B_{k}/\wh A_{k+1})^{2^{m-1-k}}}
$$
where here
\begin{align*}
\wh A_k &= \wh M_{4}(f;u_{k}x,v_{k}x,-u_{k}x),\\
\wh B_k &= \wh M_{6}(f;u_kx,v_kx,u_kx,v_kx,-u_{k+1}x).
\end{align*}
\end{thm}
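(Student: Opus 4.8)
The plan is to reduce the identity to the Galois symmetry $\wh f(ax)=\sigma_a(\wh f(x))$ from Theorem~\ref{thm:galois} together with a telescoping product, in direct analogy with the proof of Theorem~\ref{thm:recon-oddpow}. The only genuinely new wrinkle is that when $N$ is even the doubling map on $G$ need not be invertible, so instead of following the single values $\wh f(2^kx)$ I would track the products $P_k:=\wh f(u_kx)\wh f(v_kx)$, which behave well because $u_k+v_k\equiv 2^k$.

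First I would unwind the two moment quantities. Expanding $\wh M_n(f;y_1,\dots,y_{n-1})=\wh f(y_1)\cdots\wh f(y_{n-1})\wh f(-y_1-\dots-y_{n-1})$ and using $u_k+v_k\equiv 2^k$ and $u_{k+1}+v_{k+1}\equiv 2^{k+1}\pmod N$ (so that $2\cdot 2^k-u_{k+1}\equiv v_{k+1}$), the suppressed final argument of $\wh A_k$ comes out to $-v_kx$ and that of $\wh B_k$ to $-v_{k+1}x$, giving
\begin{align*}
\wh A_k&=\wh f(u_kx)\,\wh f(v_kx)\,\wh f(-u_kx)\,\wh f(-v_kx),\\
\wh B_k&=\wh f(u_kx)^2\,\wh f(v_kx)^2\,\wh f(-u_{k+1}x)\,\wh f(-v_{k+1}x).
\end{align*}
The crucial observation is that the two ``conjugate'' factors $\wh f(-u_{k+1}x)\wh f(-v_{k+1}x)$ of $\wh B_k$ are exactly the last two factors of $\wh A_{k+1}$, so they cancel in the ratio and leave
$$\frac{\wh B_k}{\wh A_{k+1}}=\frac{P_k^{2}}{P_{k+1}},\qquad P_k:=\wh f(u_kx)\wh f(v_kx).$$

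Before manipulating the products I would check they make sense: for every $k\ge 1$ the numbers $u_k,v_k$ are units modulo $N$ (this is exactly what a unital decomposition of the even element $2^k$ provides, Definition~\ref{def:unital}), so $\sigma_{u_k},\sigma_{v_k}$ are field automorphisms of $\mathbb Q(\xi_N)$ and by Theorem~\ref{thm:galois} we get $\wh f(u_kx)=\sigma_{u_k}(\wh f(x))\ne 0$ whenever $x\in\supp(\wh f)$, and likewise $\wh f(-u_kx),\wh f(-v_kx)\ne 0$; hence every $\wh A_k\neq 0$ and every $P_k\neq 0$. Then I would carry out the telescoping: for any $M\ge 1$,
$$\prod_{k=1}^{M}\left(\frac{\wh B_k}{\wh A_{k+1}}\right)^{2^{M-k}}=\prod_{k=1}^{M}\left(\frac{P_k^{2}}{P_{k+1}}\right)^{2^{M-k}}=\frac{P_1^{\,2^{M}}}{P_{M+1}},$$
since for each intermediate index $2\le j\le M$ the total exponent of $P_j$ is $2^{M-j+1}-2^{M-j+1}=0$. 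Taking $M=\ell+m-1$ and $M=m-1$, dividing, and using $P_1=\wh f(u_1x)\wh f(v_1x)=\wh f(x)^2$ (because $u_1=v_1=1$), the stated right-hand side becomes
$$\frac{\wh f(x)^{2^{\ell+m}}/P_{\ell+m}}{\wh f(x)^{2^{m}}/P_{m}}=\wh f(x)^{\,2^{m}(2^{\ell}-1)}\cdot\frac{P_m}{P_{\ell+m}}.$$
Finally, the periodic choice $u_{k+\ell}=u_k$, $v_{k+\ell}=v_k$ for $k\ge m$ yields $u_{\ell+m}=u_m$ and $v_{\ell+m}=v_m$, hence $P_{\ell+m}=P_m$ and the correction factor is $1$, which is the desired identity. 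I would also note, as in the remark after Theorem~\ref{thm:recon-oddpow}, that $\ell$ may be replaced throughout by the multiplicative order of $2$ in $(\mathbb Z/(N/2^m)\mathbb Z)^\times$.

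The step I expect to require the most care is the first one: correctly identifying the suppressed last arguments of $\wh A_k$ and $\wh B_k$ as $-v_kx$ and $-v_{k+1}x$, and spotting the cross-cancellation of the conjugate factors between $\wh B_k$ and $\wh A_{k+1}$ — this is precisely what makes the telescoped product collapse to a single power of $\wh f(x)$ up to the periodic correction $P_m/P_{\ell+m}$. The remaining bookkeeping, including small edge cases such as $m=1$ (where the denominator product is empty and $P_m=\wh f(x)^2$), is immediate once this structure is in place; the identity is primarily of interest when $N$ is even, the case $N$ odd being covered by Theorem~\ref{thm:recon-oddpow}.
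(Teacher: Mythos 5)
Your proof is correct and follows essentially the same route as the paper's: both reduce $\wh B_k/\wh A_{k+1}$ to $P_k^2/P_{k+1}$ with $P_k=\wh f(u_kx)\wh f(v_kx)$, telescope, and use $u_1=v_1=1$ together with the periodicity $u_{k+\ell}=u_k$, $v_{k+\ell}=v_k$ to collapse the quotient to $\wh f(x)^{2^m(2^\ell-1)}$. Your explicit identification of the suppressed last arguments of $\wh A_k$ and $\wh B_k$ and the Galois-symmetry justification that all factors are nonzero are details the paper leaves implicit.
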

\begin{proof}
Since $u_kx,v_kx\in\supp(\wh f)$ for all $k$, the above products are well-defined.
Furthermore, we have a telescoping product
\begin{align*}
\prod_{k=1}^{\ell+m-1} \left(\frac{\wh B_{k}}{\wh A_{k+1}}\right)^{2^{\ell+m-1-k}}
 &= \prod_{k=1}^{\ell+m-1}\left(\frac{\wh f(u_kx)^2\wh f(v_kx)^2}{\wh f(u_{k+1}x)\wh f(v_{k+1}x)}\right)^{2^{\ell+m-1-k}}\\
 &= \prod_{k=1}^{\ell+m-1}\frac{\left(\wh f(u_kx)\wh f(v_kx)\right)^{2^{\ell+m-k}}}{\left(\wh f(u_{k+1}x)\wh f(v_{k+1}x)\right)^{2^{\ell+m-1-k}}}\\
 &= \frac{\left(\wh f(u_1x)\wh f(v_1x)\right)^{2^{\ell+m-1}}}{\wh f(u_{\ell+m}x)\wh f(v_{\ell+m}x)} = \frac{\wh f(x)^{2^{\ell+m}}}{\wh f(u_{m}x)\wh f(v_{m}x)}.
\end{align*}
A similar calculation shows
$$\prod_{k=1}^{m-1} \left(\frac{\wh B_{k}}{\wh A_{k+1}}\right)^{2^{m-1-k}} = \frac{\wh f(x)^{2^{m}}}{\wh f(u_{m}x)\wh f(v_{m}x)}.$$
Dividing these two expressions, we obtain the statement of our theorem.
\end{proof}

\subsection{Reconstructing the local phases}\label{sec:reconstruction}
Using the results of the previous subsection, one is able to reconstruct $\wh f(x)$, up to local phases from knowledge of a power of $\wh f(x)$.
This still leaves the problem of determining the local phases themselves.
Doing so will mean playing with much larger autocorrelations than order $4$ or $6$ potentially, so to facilitate our explanation we adopted the following convenient abuses of notation, which we will carry throughout the rest of the paper.
\begin{itemize}
\item We set $\sigma_0(\beta)=1$ for all $\beta\in \mathbb Q(\xi_N)$
\item For any zero sequence $(x_1,\dots,x_n)$ of elements of $G$ which is zero-sum, ie. $x_1+\dots+x_n=0$, the expression
$$\wh M(f; (x_1,\dots, x_n))$$
denotes the autocorrelation $\wh M_{n-\ell-1}$ of $f$, where
we have omitted all of the $x_j$'s equal to zero (so $\ell$ is the number of omitted terms)
\end{itemize}

This problem of reconstructing the local phases is quite delicate, since in general the support of $\wh f$ might not contain a ``nice" set of generators of $G$, ie. one of the right size.
For example, if $G = \mathbb Z/6\mathbb Z$, then $\{1\}$ is a generating set for $G$ and has the right size in the sense that its size is the same as the number $r$ of invariant factors.
However, $\{2,3\}$ also generates $\mathbb Z / 6 \mathbb Z$, but isn't the right size.
This problem only gets worse when the group is not cyclic, ie. $r>1$.

To fix this, we use group quotients in order to work with finite Abelian $p$-groups, ie. finite Abelian groups whose order is a power of a prime.
In such a group, every generating set contains a generating subset of the right size.
\begin{lem}\label{lem:prime generating}
Let $p$ be a prime and $G$ be an Abelian $p$-group with $r$ invariant factors.
If $S\subseteq G$ generates $G$, then $S$ contains a subset of $r$ elements which generate $G$.
\end{lem}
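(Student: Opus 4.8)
The plan is to reduce to a statement about vector spaces over the field $\mathbb{F}_p$ by passing to the Frattini quotient of $G$. Recall that for a finite Abelian $p$-group $G$ with $r$ invariant factors, the quotient $G/pG$ is an $r$-dimensional vector space over $\mathbb{F}_p$, and a subset of $G$ generates $G$ if and only if its image in $G/pG$ spans $G/pG$ (this is the Burnside basis theorem for Abelian $p$-groups, which itself follows from Nakayama's lemma, or can be proved directly by induction on $|G|$ using that any element not in the span-plus-$pG$ can be adjoined). So first I would establish this equivalence, or simply cite it.

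Given that equivalence, the argument is short. Suppose $S \subseteq G$ generates $G$. Let $\bar{S}$ denote the image of $S$ in the $r$-dimensional $\mathbb{F}_p$-vector space $V = G/pG$; by the equivalence, $\bar{S}$ spans $V$. Since $V$ is finite-dimensional, a finite subset $\{\bar{s}_1,\dots,\bar{s}_r\}$ of $\bar{S}$ forms a basis of $V$ — extract it by the usual greedy procedure, discarding any element already in the span of those chosen so far. Choose preimages $s_1,\dots,s_r \in S$ of these basis vectors (if two elements of $S$ happen to map to the same basis vector, just pick one; this only helps). Then $\{s_1,\dots,s_r\} \subseteq S$ has exactly $r$ elements and its image in $G/pG$ spans $G/pG$, so by the equivalence again, $\{s_1,\dots,s_r\}$ generates $G$. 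This is exactly the desired conclusion.

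The one point requiring care — and the main obstacle — is the claim that spanning in $G/pG$ suffices to generate $G$; this is where the $p$-group hypothesis is essential (it is false for general finite Abelian groups, e.g. $\{2,3\}$ in $\mathbb{Z}/6\mathbb{Z}$ maps onto a spanning set of $\mathbb{Z}/6\mathbb{Z}/(2\cdot 3)\mathbb{Z} \cong 0$ trivially, illustrating why the Frattini quotient must be taken with respect to a single prime). I would prove it by induction on $|G|$: let $H = \langle s_1,\dots,s_r\rangle$; then $H + pG = G$, so $G/H$ is a quotient of $G$ with $p(G/H) = G/H$, forcing $G/H = 0$ since a nontrivial finite Abelian $p$-group has nontrivial quotient $(\cdot)/p(\cdot)$. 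This last fact is immediate from the structure theorem: if $G/H \ne 0$ then it has $\mathbb{Z}/p^k\mathbb{Z}$ as a direct summand for some $k \ge 1$, and multiplication by $p$ is not surjective on that summand. With that lemma in hand the extraction of a basis is routine linear algebra and needs no further comment.
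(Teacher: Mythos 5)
Your proposal is correct, but it takes a genuinely different route from the paper. The paper argues by induction on the number $r$ of invariant factors: the base case finds an element of maximal order in a cyclic $p$-group, and the inductive step projects $G$ onto the product $H$ of its first $r$ invariant factors, extracts $x_1,\dots,x_r\in S$ whose images generate $H$ by the inductive hypothesis, observes that the quotient of $G$ by $K=\langle x_1,\dots,x_r\rangle$ is cyclic, and then appeals to the base case once more to find $x_{r+1}\in S$. You instead pass to the Frattini quotient $G/pG$, an $r$-dimensional $\mathbb{F}_p$-vector space, invoke the equivalence ``$S$ generates $G$ iff $\bar S$ spans $G/pG$'' (the Burnside basis theorem, whose nontrivial direction you correctly reduce to the fact that a nontrivial finite Abelian $p$-group $Q$ cannot satisfy $pQ=Q$), and then extract a basis from a spanning set by ordinary linear algebra. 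Your argument is shorter once the Frattini equivalence is in hand, and it yields slightly more: every minimal generating set has exactly $r$ elements, and the generating $r$-subsets of $S$ are precisely those whose images form a basis of $G/pG$. The paper's induction buys self-containedness, using nothing beyond the structure theorem and elementary quotient manipulations. One small blemish: your parenthetical about $\{2,3\}\subseteq\mathbb{Z}/6\mathbb{Z}$ is garbled ($6\cdot(\mathbb{Z}/6\mathbb{Z})=0$, so the relevant quotient is $\mathbb{Z}/6\mathbb{Z}$ itself, not $0$); the intended point---that the Frattini-type argument must be run one prime at a time---is right, but the illustration as written does not say what you want. This aside is not load-bearing and does not affect the correctness of the proof.
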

\begin{proof}
We proceed by induction on the rank $r$.
If $r=1$, then $G\cong \mathbb{Z}/p^k\mathbb{Z}$ for some $k>1$.
If $x\in G$ has order less than $p^k$, then $x$ belongs to the subgroup $p(\mathbb{Z}/p^k\mathbb{Z})$.
Since $S$ generates $G$, it therefore must contain at least one element $x\in S$ of order $p^k$, and thus $G$ is generated by $\{x\}$.

As an inductive assumption, suppose that any generating set of any Abelian $p$-group with $r$ invariant factors contains a generating set with $r$ elements.
Let $G$ be an Abelian group with $r+1$ invariant factors, and suppose that $S\subseteq G$ is a generating set for $G$.
Without loss of generality take
$$G = \mathbb{Z}/p^{k_1}\mathbb{Z}\oplus \mathbb{Z}/p^{k_2}\mathbb{Z}\oplus \dots\oplus \mathbb{Z}/p^{k_{r+1}}\mathbb{Z},$$
for some integers $1\leq k_1 \leq k_2\leq\dots\leq k_r\leq k_{r+1}$.
Let
$$H = \mathbb{Z}/p^{k_1}\mathbb{Z}\oplus \mathbb{Z}/p^{k_2}\mathbb{Z}\oplus \dots\oplus \mathbb{Z}/p^{k_{r}}\mathbb{Z},$$
and consider the projection map
$\pi : G\rightarrow H$.
Since $S$ generates $G$, the set $\pi(S)$ generates $H$.
Therefore by our inductive assumption, 
we can choose $x_1,\dots, x_r\in S$ such that 
$\{\pi(x_1),\dots,\pi(x_r)\}$ generates $H$.

Let $K$ be the subgroup of $G$ generated by $\{x_1,\dots,x_r\}$.
The quotient group $G/K$ is a cyclic $p$-group, so there exists an $x_{r+1}\in S$ such that $x_{r+1} + K$ generates $G/K$.
It follows that $\{x_1,\dots,x_{r+1}\}$ generate $G$.
Thus by induction the theorem is proved.
\end{proof}

Now consider the prime decomposition of $N=\exp(G)$, ie.
$$N = p_1^{m_1}p_2^{m_2}\dots p_s^{m_s},$$
where here $p_1,\dots, p_s$ are distinct primes and $m_1,\dots,m_s$ are positive integers.
For each $1\leq k\leq s$, we define $b_k = N/p_k^{m_k}$.
For each $1\leq k\leq s$, define a subgroup
$$H_{k} = \{x\in G: b_kx = 0\}.$$
As a consequence of Lagrange's Theorem, the group quotient $G/H_k$ is a group with exponent $\exp(G/H_k) = p_k^{m_k}$.
By Lemma \ref{lem:prime generating}, we can choose a $x_{k1},\dots, x_{kr_k}\in \supp(\wh f)$ with $r_k\leq r$ such that 
$$\{x_{k1}+H_k,x_{k2}+H_k,\dots, x_{kr_k}+H_k\}$$
generate $G/H_k$.

Using Theorem \ref{thm:recon-oddpow} and Theorem \ref{thm:recon-evenpow}, we can determine the values of $\wh f(x)$ for each $x\in \supp(\wh f)$, up to a root of unity in $\mathbb Q(\xi_N)$.
For each $1\leq k\leq s$ and each $1\leq j\leq m_k$, 
choose values $\beta_{kj}\in \mathbb Q(\xi_N)$ in the cyclotomic extension field with $$\beta_{kj}^N=\wh f(x_{kj})^N,\quad 1\leq k\leq s,\ 1\leq j\leq r_k.$$

The values of the $\beta_{kj}$'s that we just chose may be \textit{misaligned} in the sense that they might not respect relations between the different generators we have selected.
To mitigate this, we need to \emph{realign} them by rescaling each by an appropriate root of unity.

Consider the group homomorphism
$$\varphi_k: \mathbb Z^{r_k}\rightarrow G/H_k,\quad \varphi_k(c_1,\dots,c_{r_k}) = \sum_{j=1}^{r_k}c_jx_{kj}+H_k.$$
The kernel $K_k$ of $\varphi_k$, encodes the linear relations between the generators of $G/H_k$.

The kernel $K_k$ gives us more expressions involving $\wh f$ that we can compute explicitly.
\begin{lem}
Let $x_{kj}$, $\beta_{kj}$, and $K_k$ be as above.
Then for each $(c_1,\dots, c_{r_k})\in K_k$, we can compute the product $\prod_{j=1}^{r_k}\frac{\wh f(x_{kj})^{b_kc_j}}{\beta_{kj}^{b_kc_j}}$ using only the information of autocorrelations up to order $2r$ for $N$ odd and order $3r$ for $N$ even.
\end{lem}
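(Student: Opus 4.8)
The plan is to write the product directly as a ratio of Fourier transforms of higher autocorrelations, exploiting that $\wh f(x_{kj})/\beta_{kj}$ is a root of unity. First I would unwind the hypothesis: $(c_1,\dots,c_{r_k})\in K_k$ means $\sum_j c_j x_{kj}\in H_k$, and since $H_k=\{x\in G:b_kx=0\}$ this gives the zero-sum relation $\sum_j (b_kc_j)x_{kj}=0$ in $G$. The crucial observation is that $\beta_{kj}^N=\wh f(x_{kj})^N$ forces $\zeta_{kj}:=\wh f(x_{kj})/\beta_{kj}$ to be an $N$-th root of unity (note $\beta_{kj}\neq 0$, so this is legitimate); consequently $\sigma_a(\zeta_{kj})=\zeta_{kj}^a$ for every $a$, and a product of operators $\sigma$ applied to $\zeta_{kj}$ over a collection of exponents summing to $b_kc_j$ modulo $N$ recovers $\zeta_{kj}^{b_kc_j}$ exactly. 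The point of doing this, rather than expanding $\wh f(x_{kj})^{b_kc_j}$ as a long repeated autocorrelation, is to keep the order bounded.

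Next I would bring in unital decompositions. For each $j$, choose $u_j,v_j,w_j$ with $b_kc_j\equiv u_j+v_j+w_j\pmod N$ a unital decomposition in the sense of Definition \ref{def:unital} (these exist by Theorem \ref{thm:sumset}): thus $u_j,v_j$ are units modulo $N$ always, while $w_j$ is a unit when $N$ is even and $b_kc_j$ is odd, and $w_j=0$ otherwise, so in particular $w_j=0$ for all $j$ when $N$ is odd. Then, using the convention $\sigma_0=1$ and $\zeta_{kj}^N=1$,
\begin{align*}
\prod_{j=1}^{r_k}\frac{\wh f(x_{kj})^{b_kc_j}}{\beta_{kj}^{b_kc_j}}
 &= \prod_{j=1}^{r_k}\zeta_{kj}^{b_kc_j}
 = \prod_{j=1}^{r_k}\sigma_{u_j}(\zeta_{kj})\,\sigma_{v_j}(\zeta_{kj})\,\sigma_{w_j}(\zeta_{kj}).
\end{align*}
By Theorem \ref{thm:galois}, $\sigma_a(\wh f(x))=\wh f(ax)$ whenever $a$ is a unit modulo $N$, so $\sigma_{u_j}(\zeta_{kj})=\wh f(u_jx_{kj})/\sigma_{u_j}(\beta_{kj})$ and likewise for $v_j,w_j$. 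Hence the displayed product equals
$$\frac{\wh M\big(f;\,(u_jx_{kj},v_jx_{kj},w_jx_{kj})_{1\le j\le r_k}\big)}{\prod_{j=1}^{r_k}\sigma_{u_j}(\beta_{kj})\,\sigma_{v_j}(\beta_{kj})\,\sigma_{w_j}(\beta_{kj})},$$
whose denominator is explicitly computable from the already-known $\beta_{kj}$'s and the automorphisms $\sigma_a$.

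Finally I would check that the numerator is a genuine autocorrelation value of the advertised order. Its arguments sum to $\sum_j(u_j+v_j+w_j)x_{kj}=\sum_j b_kc_jx_{kj}=0$: indeed $u_j+v_j+w_j$ differs from $b_kc_j$ by a multiple of $N=\exp(G)$, which annihilates $x_{kj}$, and then $\sum_j b_kc_jx_{kj}=b_k\big(\sum_j c_jx_{kj}\big)=0$ because $\sum_j c_jx_{kj}\in H_k$. Each nonzero entry $u_jx_{kj}$ lies in $\supp(\wh f)$, since $\wh f(u_jx_{kj})=\sigma_{u_j}(\wh f(x_{kj}))\neq 0$ as $\sigma_{u_j}$ is a field automorphism and $x_{kj}\in\supp(\wh f)$; so the autocorrelation is a product of nonzero terms and the expression is meaningful. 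Discarding the zero entries (those with $w_j=0$) as prescribed by the conventions fixed just before the lemma, the sequence has at most $3r_k$ entries, and at most $2r_k$ when $N$ is odd; since $r_k\le r$, this is an autocorrelation of order at most $3r$ ($N$ even) or $2r$ ($N$ odd), as claimed. I do not expect a serious obstacle: the substantive idea is the passage from the power $\zeta_{kj}^{b_kc_j}$ to a unital decomposition of Galois twists, and the only points needing care are reducing $b_kc_j$ modulo $N$ before forming the unital decomposition and bookkeeping for entries that vanish or coincide, both of which are handled by the notational conventions already in place.
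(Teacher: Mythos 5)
Your proposal is correct and follows essentially the same route as the paper's proof: reduce the kernel relation to the zero-sum $\sum_j b_kc_jx_{kj}=0$, take unital decompositions of the $b_kc_j$, convert each factor into Galois twists of $\wh f(x_{kj})/\beta_{kj}$, and recognize the numerator as a single autocorrelation of order at most $2r_k$ or $3r_k$. If anything, you are slightly more careful than the paper in observing that the manipulation of exponents modulo $N$ is legitimate only because the ratio $\wh f(x_{kj})/\beta_{kj}$ is an $N$'th root of unity.
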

\begin{proof}
For each $(c_1,\dots, c_{r_k})\in K_k$, we have
$$c_1x_{k1}+c_2x_{k2}+\dots+c_{r_k}x_{kr_k}=0\mod H_k,$$
and therefore
$$b_kc_1x_{k1}+b_kc_2x_{k2}+\dots+b_kc_{r_k}x_{kr_k}=0.$$
Now consider the unital decompositions of the coefficients in the above linear combination
$$b_kc_j = u_j+v_j+w_j.$$
Since $\wh f (x_{kj})/\beta_{kj}$ is a root of unity, we know $(\wh f (x_{kj})/\beta_{kj})^u=\sigma_u(\wh f (x_{kj})/\beta_{kj})$ for all $u\in (\mathbb Z/N\mathbb Z)^\times$.  Using this, we calculate
\begin{align*}
\prod_{j=1}^{r_k}\frac{\wh f(x_{kj})^{b_kc_j}}{\beta_{kj}^{b_kc_j}}
 & = \prod_{j=1}^{r_k}\frac{\wh f(x_{kj})^{u_j}\wh f(x_{kj})^{v_j}\wh f(x_{kj})^{w_j}}{\beta_{kj}^{u_j}\beta_{kj}^{v_j}\beta_{kj}^{w_j}}\\
 & = \prod_{j=1}^{r_k}\sigma_{u_j}\left(\frac{\wh f(x_{kj})}{\beta_{kj}}\right)\sigma_{v_j}\left(\frac{\wh f(x_{kj})}{\beta_{kj}}\right)\sigma_{w_j}\left(\frac{\wh f(x_{kj})}{\beta_{kj}}\right)\\
 & = \prod_{j=1}^{r_k}\frac{\wh f(u_jx_{kj})\wh f(v_jx_{kj})\wh f(w_jx_{kj})}{\sigma_{u_j}(\beta_{kj})\sigma_{v_j}(\beta_{kj})\sigma_{w_j}(\beta_{kj})}\\
 & = \frac{\wh M(f;(u_jx_{kj},v_jx_{kj},w_jx_{kj})_{j=1}^{r_k})}{\prod_{j=1}^{r_k}\sigma_{u_{j}}(\beta_{kj})\sigma_{v_{j}}(\beta_{kj})\sigma_{w_{j}}(\beta_{kj})}.
\end{align*}
The autocorrelations in the expression above are at most $2r_k\leq 2r$ if $N$ is odd and $3r_k\leq 3r$ if $N$ is even.
\end{proof}

Using the previous lemma, we can determine a condition for what roots of unity we must multiply the $\beta_{kj}$'s by in order to realign them.
\medskip\mbox{}\\
\noindent\textbf{Remodulation condition:}
We say the $N$'th roots of unity $\lambda_{kj}$ for $1\leq k\leq s$ and $1\leq j\leq r_k$ satisfy the \textit{remodulation condition} if for any $(c_1,\dots, c_{r_k})\in K_k$, we have
\begin{equation}\label{eqn:remodulation}
\prod_{j=1}^{r_k}\lambda_{kj}^{b_kc_{j}} = \prod_{j=1}^{r_k}\frac{\wh f(x_{kj})^{b_kc_j}}{\beta_{kj}^{b_kc_j}}.
\end{equation}

Note that by the previous Lemma, this condition can be computed explicitly using our limited knowledge of the autocorrelations.  Furthermore, there exist complex roots of unity $\lambda_{kj}$ satisfying the remodulation condition, since the roots of unity $\wh f(x_{kj})/\beta_{kj}$ themselves satisfy it.
In practice, finding the $\lambda_{kj}$'s comes down to determining a basis of the free Abelian group $K_k$, and then imposing the constraint \eqref{eqn:remodulation} on each element of the basis.

Since the greatest common divisior of the collection of integers $b_1,\dots,b_s$ is $1$, the Euclidean algorithm allows us to find a combination of integers $a_1,\dots, a_s$ with
$$a_1b_1+a_2b_2+\dots+a_sb_s=1.$$
Choose $\lambda_{kj}$'s satisfying the remodulation condition and define
\begin{equation}\label{eqn:aligned}
\alpha_{kj} = \lambda_{kj}\beta_{kj},\quad 1\leq k\leq s,\ 1\leq j\leq r_k.
\end{equation}
The $\alpha_{kj}$'s are \emph{aligned}, and the
 next theorem shows that they, combined with knowledge of the autocorrelations of $\wh f$ up to order $2r+2$ for $N$ odd or $3r+3$ for $N$ even, determine $\wh f(x)$ up to a global phase.

 We start by using Theorem \ref{thm:recon-oddpow} or Theorem \ref{thm:recon-evenpow} to construct a function $\wh h: G\rightarrow \mathbb Q(\xi_N)$ with the property that $\wh h(x)^N=\wh f(x)^N$ for all $x\in G$.  Without loss of generality, we can take $\wh h(x_{kj}) = \beta_{kj}$ for all $1\leq k\leq s$ and $1\leq j\leq r_k$.  The next theorem explains how to realign $\wh h$ to recover $\wh f$ up to multiplication by a character.

\begin{thm}\label{thm:finalconstruction}
Suppose that $f$ is a rational-valued function on $G$ and choose $a_k$'s, $b_k$'s, $\alpha_{kj}$'s, $x_{kj}$'s and $\wh h$ as specified by the discussion above.  
Define a complex-valued function $\wh g$ on $G$ whose support is the same as $\wh f$, and which is given for $x\in\supp(\wh f)$ by 
$$\wh g(x) = (\wh g(x)^{b_1})^{a_1}(\wh g(x)^{b_2})^{a_2}\dots(\wh g(x)^{b_s})^{a_s},$$
where here
$$\ol{\wh g^{b_k}(x)} = \wh C_k\left(\prod_{j=1}^{r_k} \sigma_{u_{kj}}(\alpha_{kj}) \sigma_{v_{kj}}(\alpha_{kj}) \sigma_{w_{kj}}(\alpha_{kj})\right)^{-1},$$
$$\wh C_k =\frac{\wh M(f; (-u_kx,-v_kx,-w_kx,u_{kj}x_{kj},v_{kj}x_{kj},w_{kj}x_{kj})_{j=1}^{r_k})\wh h(x)^{b_k}}{\sigma_{u_k}(\wh h(x))\sigma_{v_k}(\wh h(x))\sigma_{w_k}(\wh h(x))},$$
$$b_kc_{kj}=u_{kj}+v_{kj}+w_{kj}$$
is a unital decomposition of $b_k$ times the coefficients of a linear expansion
$$x = c_{k1}x_{k1}+c_{k2}x_{k2}+\dots+c_{kr_k}x_{kr_k}\mod H_k$$
for each $1\leq k\leq s$ (see Definition \ref{def:unital}), and
$$b_k = u_k + v_k + w_k$$
is a unital decomposition of $b_k$.

Then the function $\wh g(x)$ is well-defined, and there exists $y\in G$ with 
$$\wh g(x) = \wh f(x)\chi(x,y),\quad\text{for all}\ x\in G.$$
\end{thm}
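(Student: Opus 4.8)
The plan is to verify the two assertions of the theorem in turn: first that $\wh g$ is well-defined (i.e., independent of the choice of unital decompositions and of the linear expansion of $x$ modulo $H_k$), and second that $\wh g = \wh f \cdot \chi(\,\cdot\,,y)$ for some $y\in G$. For the well-definedness, I would fix $k$ and first observe that $\wh C_k$, after expanding via Proposition \ref{prop:moment2fourier} and using that $u_kx+v_kx+w_kx = x = \sum_j c_{kj}x_{kj} \bmod H_k$ together with the defining property $b_kH_k = 0$, equals $\ol{\wh f(x)^{b_k}} \prod_j \sigma_{u_{kj}}(\wh f(x_{kj}))\sigma_{v_{kj}}(\wh f(x_{kj}))\sigma_{w_{kj}}(\wh f(x_{kj}))$ up to the global-phase character coming from the shift. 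Dividing by the product of $\sigma$'s of the $\alpha_{kj}$'s then gives $\wh g^{b_k}(x) = \wh f(x)^{b_k}\prod_j (\wh f(x_{kj})/\alpha_{kj})^{\text{stuff}} \cdot \chi(x,y_k)$; the point is that $\wh f(x_{kj})/\alpha_{kj} = (\wh f(x_{kj})/\beta_{kj})\lambda_{kj}^{-1}$ is an $N$th root of unity, so $\sigma_a$ of it is its $a$th power and the exponent only depends on $c_{kj} = u_{kj}+v_{kj}+w_{kj} \bmod N$ — this kills the dependence on the unital decomposition. The residual dependence on the choice of expansion $x = \sum_j c_{kj}x_{kj} \bmod H_k$ is controlled exactly by the \textbf{remodulation condition}: two expansions differ by an element of $K_k$, and Equation \eqref{eqn:remodulation} says that $\prod_j \lambda_{kj}^{b_kc_j}$ changes in precisely the same way as $\prod_j (\wh f(x_{kj})/\beta_{kj})^{b_kc_j}$, so the two cancel. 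Hence $\wh g^{b_k}(x)$ is well-defined.

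Next I would identify what $\wh g^{b_k}$ actually is. By the computation above, for each $x\in\supp(\wh f)$ we get $\wh g^{b_k}(x) = \wh f(x)^{b_k}\,\chi(x, y_k)$ for some $y_k\in G$ that depends only on $k$ (it comes from the single shift $-u_kx-v_kx-w_kx$ balanced against the $x_{kj}$-terms, i.e.\ from the fact that the moment $\wh M$ "forgets" a translation). Actually the cleaner way to phrase it: define $h_k$ by $\wh h_k(x) = \ol{\wh g^{b_k}(x)}$ wherever $\supp(\wh f)$ lives and zero elsewhere; then $\wh h_k(x)\,\wh h_k(x')\cdots = \wh{f^{b_k}}(\,\cdot\,)$-type products agree on all zero-sum tuples supported in $\supp(\wh f)$, which by the Corollary to Proposition \ref{prop:moment2fourier} forces $\wh g^{b_k}$ to equal $\wh f^{b_k}$ up to a character. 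Then, since $\gcd(b_1,\dots,b_s)=1$ with Bézout coefficients $a_1,\dots,a_s$, we set $\wh g(x) = \prod_k (\wh g^{b_k}(x))^{a_k}$; this yields $\wh g(x) = \prod_k (\wh f(x)^{b_k}\chi(x,y_k))^{a_k} = \wh f(x)^{\sum_k a_kb_k}\chi(x,\sum_k a_ky_k) = \wh f(x)\,\chi(x,y)$ with $y = \sum_k a_k y_k$, as desired — using that $\sum a_kb_k = 1$ and that $\chi(x,\cdot)$ is a homomorphism in the second slot. One must also check the support claim ($\supp\wh g = \supp\wh f$): on $\supp\wh f$ every factor is nonzero, and off it $\wh g$ is declared zero, so this is immediate.

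The main obstacle is the well-definedness argument, specifically keeping the bookkeeping of the three competing ambiguities straight: (i) the unital decompositions of the $c_{kj}$ in the exponents, (ii) the choice of linear expansion of $x$ modulo $H_k$, and (iii) the interplay between the $p_k$-local reconstructions $\wh g^{b_k}$ when they are recombined. Ambiguity (i) is handled because the relevant quantities are $N$th roots of unity and the exponent is well-defined mod $N$; ambiguity (ii) is exactly what the remodulation condition was engineered to handle and I would spell out the reduction-to-a-basis-of-$K_k$ argument carefully; ambiguity (iii) requires noting that each $\wh g^{b_k}$ is determined only up to a character and tracking how these characters multiply under the Bézout combination — here the key is that $\chi(x,\cdot)$ is additive in its second argument so the product of characters is again a single character. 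A secondary subtlety worth a sentence is verifying that all moments appearing (the $\wh C_k$'s) have order at most $3r+3$ for $N$ even or $2r+2$ for $N$ odd: $\wh C_k$ involves three shift-terms $-u_kx,-v_kx,-w_kx$ plus up to $3r_k$ terms from the $x_{kj}$'s, but many of these vanish (e.g.\ $w=0$ in the odd case, and the $-u_kx$ etc.\ collapse when $x=x_{kj}$ or when decompositions are trivial), and with $r_k\le r$ the stated bounds follow after the conventions on dropping zero entries from Section \ref{sec:reconstruction}.
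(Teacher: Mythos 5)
Your proposal follows essentially the same route as the paper's proof: the unital-decomposition ambiguity is absorbed because the ratios $\mu_{kj}=\wh f(x_{kj})/\alpha_{kj}$ are $N$'th roots of unity on which $\sigma_a$ acts by taking $a$'th powers, the expansion ambiguity is handled by applying the remodulation condition to differences of expansions lying in $K_k$, and the residual factor is a character that the B\'ezout relation $\sum_k a_kb_k=1$ collapses to a single global phase. The paper packages this last step as one explicit character $q(x)=\prod_{k,j}\mu_{kj}^{-a_kb_kc_{kj}}$ rather than as per-$k$ characters $\chi(\cdot,y_k)$ multiplied together, but that is only a cosmetic difference.
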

\begin{proof}
For each $1\leq k\leq s$ and $1\leq j\leq r_k$, there exists an $N$'th root of unity $\mu_{kj}$ with 
$$\wh f(x_{kj}) = \mu_{kj}\alpha_{kj}.$$
Let $x\in\supp(\wh f)$.
Choose integers $c_{k1},\dots, c_{kr_k}$ satisfying
$$x = c_{k1}x_{k1}+c_{k2}x_{k2}+\dots+c_{kr_k}x_{kr_k}\mod H_k.$$
Then
$$b_kx = b_kc_{k1}x_{k1} + b_kc_{k2}x_{k2}+\dots+b_kc_{kr_k}x_{kr_k}.$$
Now if $u\in(\mathbb Z/N\mathbb Z)^\times$, then $\sigma_u(\wh f(x)) = \wh f(ux)$, and since $\wh f(x)/\wh h(x)$ is a root of unity,
$$\left(\frac{\wh f(x)}{\wh h(x)}\right)^u=\sigma_u\left(\frac{\wh f(x)}{\wh h(x)}\right) = \frac{\wh f(ux)}{\sigma_u(\wh h(x))},\quad \forall u\in(\mathbb Z/N\mathbb Z)^\times.$$
Therefore for any $x\in\supp(\wh f)$
\begin{align*}
\frac{\ol{\wh f^{b_k}(x)}}{\wh h(x)^{b_k}}
  & = \left(\frac{\ol{\wh f(x)}}{\wh h(x)}\right)^{u_k}\left(\frac{\ol{\wh f(x)}}{\wh h(x)}\right)^{v_k}\left(\frac{\ol{\wh f(x)}}{\wh h(x)}\right)^{w_k}\\
  & = \sigma_{u_k}\left(\frac{\ol{\wh f(x)}}{\wh h(x)}\right)\sigma_{v_k}\left(\frac{\ol{\wh f(x)}}{\wh h(x)}\right)\sigma_{w_k}\left(\frac{\ol{\wh f(x)}}{\wh h(x)}\right)\\
  & = \frac{\wh f(-u_kx)\wh f(-v_kx)\wh f(-w_kx)}{\sigma_{u_k}(\wh h(x))\sigma_{v_k}(\wh h(x))\sigma_{w_k}(\wh h(x))}\\  
  & = \frac{\wh C_k}{\wh h(x)^{b_k}}\left(\prod_{j=1}^{r_k}\wh f(u_{kj}x_{kj})\wh f(v_{kj}x_{kj})\wh f(w_{kj}x_{kj})\right)^{-1}\\
  & = \frac{\wh C_k}{\wh h(x)^{b_k}}\left(\prod_{j=1}^{r_k}\mu_{kj}^{b_kc_{kj}}\sigma_{u_{kj}}(\alpha_{kj})\sigma_{v_{kj}}(\alpha_{kj})\sigma_{w_{kj}}(\alpha_{kj})\right)^{-1}\\
  & = 
  \frac{\overline{\wh g^{b_k}(x)}}{\wh h(x)^{b_k}}\left(\prod_{j=1}^{r_k}\mu_{kj}^{b_kc_{kj}}\right)^{-1}.
\end{align*}

Therefore
$$\wh g^{b_k}(x) = \wh f^{b_k}(x) \prod_{j=1}^{r_k}\mu_{kj}^{-b_kc_{kj}}.$$
This in particular shows that the value of $\wh g^{b_k}(x)$ is independent of the choice of unital decomposition.

To finish showing $\wh g^{b_k}(x)$ is well-defined, we need to show that the value is also independent of the choice of linear expansion of $x$.
Suppose that we have another linear expansion
$$x = \wt c_{k1}x_1 + \wt c_{k2}x_2 + \dots + \wt c_{kr_k}x_{kr_k}\mod H_k.$$
Then $(c_{k1}-\wt c_{k1},\dots, c_{kr_k}-\wt c_{kr_k})\in K_k$, and the fact that the $\alpha_{kj}$'s are aligned implies
\begin{align*}
\prod_{j=1}^{r_k}\mu_{kj}^{b_k(c_{kj}-\wt c_{kj})}
  & = \prod_{j=1}^{r_k}\frac{\wh f(x_{kj})^{b_k(c_{kj}-\wt c_{kj})}}{\alpha_{kj}^{b_k(c_{kj}-\wt c_{kj})}}\\
  & = \prod_{j=1}^{r_k}\frac{\wh f(x_{kj})^{b_k(c_{kj}-\wt c_{kj})}}{\lambda_{kj}^{b_k(c_{kj}-\wt c_{kj})}\beta_{kj}^{b_k(c_{kj}-\wt c_{kj})}} = 1.
\end{align*}
Therefore
$$\wh f^{b_k}(x) \prod_{j=1}^{r_k}\mu_{kj}^{-b_kc_{kj}} = \wh f^{b_k}(x) \prod_{j=1}^{r_k}\mu_{kj}^{-b_k\wt c_{kj}},$$
showing that $\wh g^{b_k}(x)$ is well-defined.
It follows that $\wh g(x)$ itself is well-defined.
Finally, notice that the function $q: G\rightarrow\mathbb C$ defined by
$$q(x) = \prod_{k=1}^{s}\prod_{j=1}^{r_k}\mu_{kj}^{-a_kb_kc_{kj}}$$
where for each $1\leq k\leq s$
$$x = c_{k1}x_{k1}+c_{k2}x_{k2}+\dots+c_{kr_k}x_{kr_k}\mod H_k,$$
is well-defined by the same argument as above and satisfies
$$q(x+x') = q(x)q(x')\quad\text{for all}\ x,x'\in G.$$
Therefore $q$ is a character and there exists $y\in G$ with $q(x) = \chi(x,y)$ for all $x\in G$.
Furthermore,
$$\wh g(x) = \chi(x,y)\wh f(x),\quad\text{for all}\ x\in G$$
and it follows that
$$g(x) = f(x+y),\quad\text{for all}\ x\in G.$$
\end{proof}

In the construction of $G$, we only used autocorrelations up to $2r+2$ for $N$ odd or $3r+3$ for $N$ even, so this also proves our Main Theorem.

\section{Sharpness of the bound}
We conclude our paper with a collection of illuminating examples which demonstrate that we could not do any better than the number of autocorrelations that we found.

\subsection{Examples demonstrating sharpness}
We start our example section by providing a couple examples of functions which require autocorrelations of very high order in order to be reconstructed.

Our first example shows that in general for a cyclic group we will require autocorrelations up to order six in order to determine a function.

\begin{ex}\label{ex:Z6example}
 Having agreement up to the sixth autocorrelation in the even setting is necessary in the previous case.
For example, suppose $G = \mathbb{Z}/6\mathbb{Z}$.
Also fix a choice of integer $d^2\in \mathbb{Z}$ (not necessarily a perfect square).

For any integers $a,b\in\mathbb{Z}$ satisfying $d^2 = a^2+3b^2$, the function whose discrete Fourier transform is
$$\wh f(a) = \left\lbrace\begin{array}{cc}
6(a \pm \sqrt{-3}b), & a=\pm 1\\
0, & \text{otherwise}
\end{array}\right.$$
has second autocorrelations defined by
$$\wh f(k)\wh f(-k) = \left\lbrace\begin{array}{cc}
6^2d^2, & k=\pm 1\\
0 & \text{otherwise}
\end{array}\right.$$
and fourth autocorrelations defined by
$$\wh f(i)\wh f(j)\wh f(k)\wh f(-i-j-k) = 6^4d^4$$
if $|i|=|j|=|k|=|i+j+k|=1$ and $0$ otherwise.
Since $6$ is even and $\wh f(x)$ is zero for $x$ even, the third and fifth autocorrelations are identically zero.
In particular, all the autocorrelations depend solely on the on value of $d^2$.

Thus for any integers $a,b$ satisfying $a^2+3b^2=d^2$ the function
\begin{center}
\begin{tabular}{c|c|c|c|c|c|c}
$x$    & $0$ & $1$ & $2$ & $3$ & $4$ & $5$\\\hline
$f(x)$ & $2a$ & $a-3b$ & $-a-3b$ & $-2a$ & $3b-a$ & $a+3b$
\end{tabular}
\end{center}
has the same autocorrelations up to order $5$.
\end{ex}
\begin{remk}
This generalizes all of the examples found in Gr\"{u}nbaum and Moore's original paper \cite{grunbaum}.
\end{remk}

The first example was simple to create, and can be expanded easily to show that autocorrelations of order at least $3r+1$ are required to reconstruct a rational-valued function. This is done in the next example.
\begin{ex}
Let $r>1$ and consider the group $G = (\mathbb{Z}/6\mathbb{Z})^r$, and set $e_1=(1,0,0\dots,0)$, $e_2=(0,1,0,\dots, 0)$, and so on through $e_r=(0,0,0,\dots,1)$.
Let $f$ be the function whose discrete Fourier transform is defined by
$$\wh f(\pm e_j) = 1,\quad 1\leq j\leq r,$$
and $\wh f(3,3,3,\dots,3) = 1$,
with $\wh f(x) = 0$ otherwise.
Likewise, let $g$ be the function whose discrete Fourier transforms is given by
$$\wh g(\pm e_j) = 1,\quad 1\leq j\leq r,$$
and $\wh g(3,3,3,\dots,3) = -1$,
with $\wh g(x) = 0$ otherwise.

If $\xi_6$ is a primitive $6$'th root of unity, then the only automorphisms of the field extension $\mathbb{Q}(\xi_6)$ are the identity $\sigma_1$ and complex conjugation $\sigma_5$, since the only elements of $(\mathbb Z/6\mathbb Z)^\times$ are $1$ and $5$.
Therefore $\wh f(ax) = \sigma_a(\wh f(x))$ and $\wh g(ax) = \sigma_a(\wh g(x))$, for all $a\in (\mathbb Z/6\mathbb Z)^\times$.
It follows by Theorem \ref{thm:galois} that $f$ and $g$ are rational-valued.

Finally, suppose we have a zero-sum sequence
$x_1+x_2+\dots+x_n=0$ for some $n\leq 3r$.
If at least one of the $x_j's$ is not in 
$$\{e_1,e_2,\dots,e_r,(3,3,3,\dots,3)\},$$
then
$$\wh f(x_1)\dots\wh f(x_n) = 0 = \wh g(x_1)\dots\wh g(x_n).$$
Assume otherwise, and let $t$ be the number of times $(3,3,3,\dots,3)$ appears in the sequence. 
If $t$ is odd, then each of the $e_j$'s must appear at least three times.  This would force $3r+t\leq n \leq 3r$, which is a contradiction.
This means that $t$ is even, and therefore  
$$\wh f(x_1)\dots\wh f(x_n) = 1 = \wh g(x_1)\dots\wh g(x_n),$$
since all the terms in each product are ones.
Thus the autocorrelations of $\wh f$ and $\wh g$ agree up to order $3r$.

However, $g$ is not a translation of $f$ since if it were then there would exist $y = (y_1,\dots, y_r)\in G$ with 
$$\wh g(x) = \wh f(x)\chi(x,y),\quad\text{for all}\ x\in G,$$
then inserting $x=e_k$ forces $e^{2\pi iy_k/6} = 1$ for all $k$, so that $y=0$ and $\wh f = \wh g$.
This is impossible, since
$$\wh g(3,3,3,\dots,3) = -1\neq 1 = \wh f(3,3,3,\dots,3).$$
\end{ex}
Now we demonstrate a more complicated example to show that the autocorrelations up to order $3r+2$ are also insufficient to determine a rational-valued function in general.
This proves that the upper bound we found on the required number of autocorrelations is sharp.

\begin{ex}\label{ex:kahuna}
Let $p$ and $q$ be distinct odd primes, let $r>0$ be an integer, and consider the group $G = (\mathbb{Z}/2pq\mathbb{Z})^r$.
Set $e_1=(1,0,0\dots,0)$, $e_2=(0,1,0,\dots, 0)$, and so on through $e_r=(0,0,0,\dots,1)$.

Let $f$ and $g$ be the functions whose discrete Fourier transforms are defined to be zero, except for the values
$$\wh f(ap(q+2)e_k) = \wh g(ap(q+2)e_k) = 1$$
for all $1\leq k\leq r$ and $a\in(\mathbb{Z}/2pq\mathbb{Z})^\times$,
and
$$\wh f(aq(p+2),aq(p+2),\dots,aq(p+2)) = 1,$$
and
$$\wh g(aq(p+2),aq(p+2),\dots,aq(p+2)) = (-1)^a,$$
for all $a\in (\mathbb{Z}/2pq\mathbb{Z})^\times$.
By virtue of their definitions, we know
$$\wh f(ax) = \sigma_a(\wh f(x))\quad\text{and}\quad \wh g(ax) = \sigma_a(\wh g(x))$$
for all $a\in(\mathbb Z/2pq\mathbb{Z})^\times$.
Therefore, $f$ and $g$ are rational functions by Theorem \ref{thm:galois}.

The functions $f$ and $g$ are not translations of each other.
To see this, assume that they were.
Then there would exist $y=(y_1,\dots, y_r)\in G$ with
$\wh g(x) = \wh f(x)\chi(x,y)$.
Evaluating this expression at $x=e_k^{p(q+2)}$, we get that $y_k = 0\mod 2q$ for all $1\leq k\leq r$.
In particular, this means we can write $y_k=2z_k$ for some integers $z_1,\dots,z_r$.
However, evaluating $\wh g(x)^p = \wh f(x)^p\chi(x,y)^p$ at $(q(p+2),q(p+2),\dots,q(p+2))$ we then get 
$$-1 = (-1)^p = e^{2\pi i (z_1+\dots+z_r)(p+2)} = 1.$$
This is a contradiction, proving that $f$ and $g$ are not translations of each other.

The autocorrelations of $\wh f$ and $\wh g$ agree up to order $3r+2$.
To see this, suppose $x_1,\dots, x_n\in G$ with $x_1+\dots+x_n=0$.
If any of the $x_j$'s lie outside the support of $\wh f$, then $\wh f(x_1)\dots\wh f(x_n)=\wh g(x_1)\dots\wh g(x_n)=0$.
Therefore without loss of generality, we may assume $x_1,\dots,x_n\in\supp(\wh f)$.

For each $1\leq k\leq r$ and $a\in (\mathbb{Z}/2pq\mathbb{Z})^\times$, define
$$n_{ka} = \#\{x_j: x_j=ap(q+2)e_k\}$$
and
$$m_{a} = \#\{x_j: x_j=(aq(p+2),aq(p+2),\dots,aq(p+2))\}$$
and set
$$
A_k = \sum_{a\in (\mathbb{Z}/2pq\mathbb{Z})^\times} n_{ka}a
\quad\text{and}\quad
B = \sum_{a\in (\mathbb{Z}/2pq\mathbb{Z})^\times} m_{a}a.
$$
In this notation,
$$\wh f(x_1)\dots\wh f(x_n) = 1\quad\text{and}\quad\wh g(x_1)\dots\wh g(x_n) = (-1)^B.$$

If $B$ is even, then the autocorrelations agree.
Suppose instead that $B$ is odd.
Then the condition $x_1+\dots+x_n=0$ forces
$$A_k p(q+2) + B q(p+2)=0\mod 2pq$$
for each $1\leq k\leq r$.
This in turn forces $A_k$ to be an odd multiple of $q$ and $B$ to be an odd multiple of $p$.
Therefore, in order to write $A_k$ and $B$ as sums of elements of $(\mathbb{Z}/2pq\mathbb{Z})^\times$, we must use three or more elements.
In other words, $\sum_{a\in (\mathbb{Z}/2pq\mathbb{Z})^\times} n_{ka}\geq 3$
and $\sum_{a\in (\mathbb{Z}/2pq\mathbb{Z})^\times} m_{a}\geq 3.$
Hence
$$n=\sum_{a\in (\mathbb{Z}/2pq\mathbb{Z})^\times} m_{a} + \sum_{k=1}^r\sum_{a\in (\mathbb{Z}/2pq\mathbb{Z})^\times} n_{ka} \geq 3 + 3r.$$
Therefore to see this discrepancy between the autocorrelations, we must be considering a autocorrelation of order at least $3r+3$.
In particular, this proves that the autocorrelations of $f$ and $g$ will all agree up to order $3r+2$.
\end{ex}

The smallest two-dimensional case of Example \ref{ex:kahuna} is for $p=3$ and $q=5$.
In this case, we get two functions $f$ and $g$ on a $30\times 30$ grid whose autocorrelations are the same up to order $8$ (see Figure \ref{fig:Z30Z_squared_f_and_g}).
The two-dimensional discrete Fourier transforms of $f$ and $g$ are defined by
$$\wh f(k,0) = \wh f(0,k) = \wh g(k,0)=\wh g(0,k) = 1$$
for $k\in \{3,9,21,27\}$;
$$\wh f (5,5) = \wh f (25,25) = 1;$$
$$\wh g (5,5) = \wh g (25,25) = -1;$$
and $\wh f (x) = \wh g(x) = 0$ elsewhere. 
The functions $f$ and $g$ are not translations of each other.  In fact, one can see that $g$ is a translation of $-f$ by $(-5,5)\in \mathbb Z/30\mathbb Z \times\mathbb Z/30\mathbb Z$.  Therefore $f$ and $g$ are translations of each other up to a ``global phase" difference of $-1$.  The difference in global phase isn't detected until the ninth autocorrelation because the odd autocorrelations of orders $1,3,5,$ and $7$ are all zero.

\begin{figure}[htp]
\centering
\includegraphics[width=\linewidth]{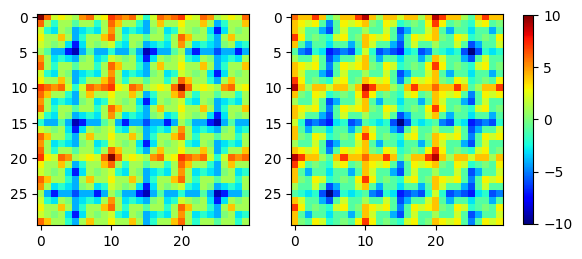}
\caption{A pixel representation of the functions $f$ (left) and $g$ (right) from Example \ref{ex:kahuna} with $p=3$ and $q=5$. These are two functions on a $30\times30$ grid whose autocorrelations agree up to order $8$, but which are not translations of each other.  In fact, $g$ is a translation of $-f$ by $(-5,5)$.  The odd-order autocorrelations vanish until order $9$, so this difference in global phase is not detected until then.}
\label{fig:Z30Z_squared_f_and_g}
\end{figure}

Functions like $f$ and $g$ with very low autocorrelations are interesting in their own right.  They may be useful in radar and communication systems \cite{song}, cryptography, studying extremal cases of low-autocorrelation discrete sequences like binary sequences \cite{packebusch} and Barker sequences \cite{ukil}.

\begin{remk}
We can adopt Example \ref{ex:kahuna} to an odd order Abelian group by considering the group $G = (\mathbb Z/\ell pq)^r$ where $\ell$ is an odd number not divisible by $p$ or $q$.
Then define $\wh f,\wh g: G\rightarrow\mathbb C$ to be zero except for
$$\wh f(ap(q+\ell)e_k)=\wh g(ap(q+\ell)e_k) = 1,$$
$$\wh f(aq(p+\ell),aq(p+\ell),\dots,aq(p+\ell)) = 1,$$
and
$$\wh g(aq(p+\ell),aq(p+\ell),\dots,aq(p+\ell)) = \omega_\ell^a,$$
for $a\in (\mathbb Z/\ell pq\mathbb Z)^\times$ where here $\omega_\ell$ is a primitive $\ell$'th root of unity.  Then the autocorrelations of the functions $f$ and $g$ will agree up to order $2r+1$, but the functions are not translations of one another.
\end{remk}

\section*{Acknowledgements}
\thanks{The research of W.R.C. has been supported by an AMS-Simons Research Enhancement Grant, and RSCA intramural grant 0359121 from CSUF.  Both authors would also like to thank Professor Kenneth Childers of CSUF for helpful discussions regarding $X$-ray crystallography and Professor F.~Alberto Gr\"{u}nbaum for initially pointing out the problem to the first author.  We would also like to thank the anonymous referees whose careful reading and suggestions greatly improved the present manuscript.}

\bibliographystyle{plain}
\bibliography{moments}

\end{document}